  \edef\maintocdepth{\the\value{tocdepth}}%
\newcommand{\alg}{\textsc{BARRONS}\xspace}
\newcommand{\adaalg}{\textsc{Ada}-\textsc{BARRONS}\xspace}
\newcommand{\Reg}{\text{\rm Reg}}
\newcommand{\one}{\boldsymbol{1}}
\newcommand{\nb}{\nabla}
\newcommand{\epo}{\text{epoch}}
\DeclarePairedDelimiter\bigabs{\big\lvert}{\big\rvert}
\newcommand{\field}[1]{\mathbb{#1}}
\newcommand{\fR}{\field{R}}
\newcommand{\inner}[1]{ \left\langle {#1} \right\rangle }
\newcommand{\norm}[1]{\left\|{#1}\right\|}
\newcommand{\order}{\ensuremath{\mathcal{O}}}
\newcommand\numberthis{\addtocounter{equation}{1}\tag{\theequation}}
\newtheorem{theorem}{Theorem}
\newtheorem{lemma}[theorem]{Lemma}
\DeclareMathOperator*{\argmin}{\arg\!\min}
\title{Efficient Online Portfolio with Logarithmic Regret}
\author{
  Haipeng Luo\\
  Department of Computer Science\\
  University of Southern California\\
  \texttt{haipengl@usc.edu} \\
  \And
  Chen-Yu Wei\\
  Department of Computer Science \\
  University of Southern California \\
  \texttt{chenyu.wei@usc.edu} \\
  \AND
  Kai Zheng \\
  Key Laboratory of Machine Perception, MOE, School of EECS, Peking University  \\
  Center for Data Science, Peking University, Beijing Institute of Big Data Research \\
  \texttt{zhengk92@pku.edu.cn} \\
}
\begin{document}

\maketitle

\begin{abstract}
We study the decades-old problem of online portfolio management and
propose the first algorithm with logarithmic regret that is not based on Cover's
Universal Portfolio algorithm and admits much faster implementation.
Specifically Universal Portfolio enjoys optimal regret $\order(N\ln T)$ for $N$ financial instruments over $T$ rounds,
but requires log-concave sampling and has a large polynomial running time.
Our algorithm, on the other hand, ensures a slightly larger but still logarithmic regret of $\order(N^2(\ln T)^4)$,
and is based on the well-studied Online Mirror Descent framework with a novel regularizer that can be implemented 
via standard optimization methods in time $\order(TN^{2.5})$ per round.
The regret of all other existing works is either polynomial in $T$ or has a potentially unbounded factor
such as the inverse of the smallest price relative.

\end{abstract}

\section{Introduction}

We consider the well-known online portfolio management problem~\citep{cover1991universal},
where a learner has to sequentially decide how to allocate her wealth 
over a set of $N$ financial instruments in order to maximize her return,
importantly under no assumptions at all on how the market behaves.
Specifically, for each trading period $t = 1, \ldots, T$, the learner first decides
the proportion of her wealth to invest on each stock,
and then by the end of the period observes the return of each stock
and continues to invest with her total wealth.
The goal of the learner is to maximize the ratio between her total wealth after 
$T$ rounds and the total wealth of the best {\it constant-rebalanced portfolio} (CRP)
which always rebalances the wealth to ensure a fixed proportion of investment for each stock.
Equivalently, the learner aims to minimize her {\it regret},
which is the negative logarithm of the aforementioned ratio.

The minimax optimal regret for this problem is $\order(N\ln T)$,
achieved by Cover's Universal Portfolio algorithm~\citep{cover1991universal}.
This algorithm requires sampling from a log-concave distribution
and all known efficient implementations have large polynomial (in $N$ and $T$) running time,
such as $\order(T^{14}N^4)$~\citep{kalai2002efficient}.\footnote{%
Recent improvements on log-concave sampling such as~\citep{bubeck2015sampling, lovasz2006fast, narayanan2010random}
may lead to improved running time, but it is still a large polynomial.
}

Online Newton Step (ONS)~\citep{hazan2007logarithmic}, on the other hand,
follows the well-studied framework of Online Mirror Descent (OMD) with a simple time-varying regularizer
and admits much faster implementation via standard optimization methods.
The regret of ONS is $\order(GN\ln T)$ where $G$ is the largest gradient $\ell_\infty$-norm encountered over $T$ rounds 
(formally defined in Section~\ref{subsec:setup})
and can be arbitrarily large making the bound meaningless.
A typical way to prevent unbounded gradient is to mix the output of ONS with a small amount of uniform distribution,
which after optimal trade-off can at best lead to a regret bound of $\order(N\sqrt{T\ln T})$.
An earlier work~\citep{agarwal2005efficient} achieves a worse regret bound of $\order(G^2N\ln(NT))$ via
an efficient algorithm based on another well-known framework Follow-the-Regularized-Leader (FTRL).

There are also extremely efficient approaches with time complexity $\order(N)$ or $\order(N\ln N)$ per round,
such as exponentiated gradient~\citep{helmbold1998line}, online gradient descent~\citep{zinkevich2003online}, 
and Soft-Bayes from recent work of~\citep{orseau2017soft}.
The first two achieve regret of order $\order(G\sqrt{T\ln N})$ and $\order(G\sqrt{T})$ respectively\footnote{%
For online gradient descent, $G$ is the largest gradient $\ell_2$-norm.
} 
while the last one achieves $\order(\sqrt{NT\ln N})$ without the dependence on $G$.
Despite being highly efficient, all of these approaches fail to achieve the optimal logarithmic dependence on $T$ for the regret.

As one can see, earlier works all exhibit a trade-off between regret and time complexity.
A long-standing open question is how fast an algorithm with optimal regret can be.
Specifically, are there algorithms with optimal regret and similar or even better time complexity compared to ONS?

In this work, we make a significant step toward answering this question
by proposing a simple algorithm with regret $\order(N^2(\ln T)^4)$ and time complexity $\order(TN^{2.5})$ per round.
To the best of our knowledge, this is the first algorithm with logarithmic regret (and no dependence on $G$)
that is not based on Cover's algorithm and admits fast implementation comparable to ONS and~\citep{agarwal2005efficient}.
As a comparison, we show in Table~\ref{tab:comparisons} the regret and time complexity of existing works and ours,
where for OMD/FTRL-type algorithms we do a naive calculation of the running time based on the Interior Point Method~\citep{nesterov1994interior} 
(to solve the key optimization problems involved), despite the possibility of even faster implementation.

Our algorithm is parameter-free and deterministic.
It follows the OMD framework with a novel regularizer that is a mixture of the one used in ONS
and the so-called log-barrier (a.k.a. Burg entropy)~\citep{foster2016learning, agarwal2017corralling, wei2018more}.\footnote{%
A recent work~\citep{bubeck2018sparsity} uses a mixture of the Shannon entropy and log-barrier as the regularizer for a different problem.
}
Critically, our algorithm also relies on an increasing learning rate schedule for the log-barrier similar to recent works on bandit problems~\citep{agarwal2017corralling, wei2018more},
as well as another sophisticated adaptive tuning method for the learning rate of the ONS regularizer, which resembles the standard doubling trick but requires new analysis since monotonicity does not hold for our problem.

 \renewcommand{\arraystretch}{1.3}
\begin{table}[t] 
  \centering
  \caption{Comparisons of regret and running time of different algorithms. Note that $G$ is potentially unbounded.
  For running time, we assume Interior Point Method is used to solve the involved optimization problems,
  and omit all logarithmic (in $N$ and $T$) factors.}
  \begin{tabular}{ | c | c | c | }
    \hline
    Algorithm & Regret & Time (per round) \\ 
    \hline 
    Universal Portfolio~\citep{cover1991universal, kalai2002efficient} & $N\ln T$ & $T^{14}N^4$ \\  
    \hline
    ONS~\citep{hazan2007logarithmic} & $GN\ln T$ & $N^{3.5}$\\
    \hline
    FTRL~\citep{agarwal2005efficient} & $G^2 N\ln (NT)$ & $TN^{2.5}$\\
    \hline
    EG~\citep{helmbold1998line}  & $G\sqrt{T\ln N}$ & $N$ \\
    \hline
    Soft-Bayes~\citep{orseau2017soft} & $ \sqrt{NT\ln N}$ & $N$ \\
    \hline
    \adaalg (\textbf{this work}) & $ N^2(\ln T)^4 $ & $TN^{2.5}$  \\
    \hline
  \end{tabular}
  \label{tab:comparisons}
\end{table}

\subsection{Notation and Setup}\label{subsec:setup}
The online portfolio problem fits into the well-studied online learning framework (see for example~\citep{hazan2016introduction}).
Formally, the problem proceeds for $T$ rounds (for some $T > N$).
On each round $t = 1, \ldots, T$, the learner first decides
a distribution $x_t \in \Delta_N$ where $\Delta_N$ is the $(N-1)$-dimensional simplex.
After that the learner observes the {\it price relative vector} $r_t \in \fR_+^N$
so that her total wealth changes by a factor of $\inner{x_t, r_t}$.
Taking the negative logarithm, this corresponds to observing a loss function $f_t(x) = -\ln\inner{x_t, r_t}$ for $x\in \Delta_N$,
chosen arbitrarily by an adversary.
The regret of the learner against a CRP parameterized by $u\in \Delta_N$ is then defined as 
\[
\Reg(u) = \sum_{t=1}^T f_t(x_t) - \sum_{t=1}^T f_t(u) = -\ln\frac{\Pi_{t=1}^T \inner{x_t, r_t}}{\Pi_{t=1}^T \inner{u, r_t}},
\]
which is exactly the negative logarithm of the ratio of total wealth per dollar invested between the learner and the CRP $u$.
Our goal is to minimize the regret against the best CRP, that is, to minimize $\max_{u \in \Delta_N} \Reg(u)$.
This setup is also useful for other non-financial applications such as data compression~\citep{cover1996universal, orseau2017soft}. 

Note that the regret is invariant to the scaling of each $r_t$ and thus without loss of generality we assume $\max_{i \in [N]} r_{t,i} = 1$ for all $t$ where
we use the notation $[n]$ to represent the set $\{1, \ldots, n\}$.
It is now clear what the aforementioned largest gradient norm $G$ formally is:
$G = \max_{t\in[T]} \norm{\nabla f_t(x_t)}_\infty = \max_{t\in [T], i \in [N]} \frac{r_{t,i}}{\inner{x_t, r_t}} \leq \min\{\frac{1}{\min_{t,i} r_{t,i}}, \frac{1}{\min_{t,i}, x_{t,i}}\}$,
which in general can be unbounded.
To control its magnitude, previous works~\citep{agarwal2005efficient, agarwal2006algorithms, hazan2007logarithmic, helmbold1998line} 
either explicitly force $x_{t,i}$ to be lower bounded, which leads to worse regret,
or make the so-called no-junk-bonds assumption (that is, $\min_{t,i} r_{t,i}$ is not too small), 
which might make sense for the portfolio problem but not other applications~\citep{orseau2017soft}.
Our main technical contribution is to show how this term can be automatically canceled 
by a negative regret term obtained from the log-barrier regularizer with increasing learning rates.

\section{Barrier-Regularized Online Newton Step}
\label{sec:fixed beta}
\begin{algorithm}[t!]
 \caption{BARrier-Regularized Online Newton Step (\alg)}
\label{algorithm:base}
 \textbf{Input}: $0<\beta\leq \frac{1}{2}, 0< \eta \leq 1$ \\
 \textbf{Define}: $\bar{\Delta}_N=\{x\in \Delta_N: x_i \geq \frac{1}{NT}, \;\forall i\}$ \\
 \textbf{Initialize}: $x_1=\frac{1}{N}\one$, $A_0=N I_N$ where $\one$ is the all-one vector and $I_N$ is the $N\times N$ identity matrix. \\

\For{$t=1,2, \ldots$}{
 Predict $x_t$ and observe loss function $f_t(x) = -\ln\inner{x, r_t}$.\\
 Make updates
\begin{align}
A_{t} &= A_{t-1} + \nabla_t \nabla_t^\top  \notag \\
\eta_{t,i} &= \eta \exp\left( \max_{s\in[t]}\log_T \frac{1}{Nx_{s,i}}  \right) \label{eq:eta} \\
x_{t+1} &= \argmin_{x\in \bar{\Delta}_N} \inner{x, \nb_t} + D_{\psi_t}(x,x_t) \label{eq:OMD}
\end{align}
where $\nabla_t = \nabla f_t(x_t)$ and $\psi_t(x) = \frac{\beta}{2}\norm{x}^2_{A_t} + \sum_{i=1}^N \frac{1}{\eta_{t,i}} \ln\frac{1}{x_i}$.
}
\end{algorithm}

Recall that for a sequence of convex regularizers $\psi_t$, the outputs of Online Mirror Descent are defined by
$x_{t+1} = \argmin_{x\in \Delta_N} \inner{x, \nabla_t} + D_{\psi_t}(x, x_t)$
where $\nabla_t$ is a shorthand for $\nabla f_t(x_t)$, $D_{\psi_t}(x, y) = \psi_t(x) - \psi_t(y) - \inner{\nabla\psi_t(y), x-y}$
is the Bregman divergence associated with $\psi_t$, and we start with $x_1$ being the uniform distribution.
The intuition is that we would like $x_{t+1}$ to have small loss with respect to a linear approximation of $f_t$,
and at the same time to be close to the previous decision $x_t$ to ensure stability of the algorithm.

Although not presented in this form originally, 
Online Newton Step~\citep{hazan2007logarithmic} is an instance of OMD with $\psi_t(x) = \frac{\beta}{2}\norm{x}_{A_t}^2 = \frac{\beta}{2} x^\top A_t x$
where $A_t = A_{t-1} + \nabla_t \nabla_t^\top$ (for some $A_0$) is the gradient covariance matrix and $\beta$ is a parameter.
The analysis of~\citep{hazan2007logarithmic} shows that the regret of ONS for the portfolio problem is $\Reg(u) = \order(\frac{N\ln T}{\beta})$ 
as long as $\beta \leq \min\big\{\frac{1}{2}, \min_{s\in[T]}\frac{1}{8|(u-x_s)^\top \nb_s |}\big\}$.
Even assuming an oracle tuning, by H\"{o}lder inequality this gives $\order(GN\ln T)$ as mentioned.

To get rid of the dependence on $G$, we observe the following.
Since $\nabla_t=-\frac{r_t}{\inner{x_t,r_t}}$, its $\ell_\infty$-norm is large only when there is a stock with high reward $r_{t,i}$ while the learner puts a small weight $x_{t,i}$ on it.
However, the reason that the weight $x_{t,i}$ is small is because the learner finds it performing poorly prior to round $t$, which means that the learner had better choices and actually should have performed better than stock $i$ previously (that is, negative regret against stock $i$). 
Now as stock $i$ becomes good at time $t$ and potentially in the future, 
as long as the learner can pick up this change quickly, the overall regret should not be too large.

Similar observations were made in previous work~\cite{agarwal2017corralling, wei2018more} for different problems in the bandit setting,
where they introduced the log-barrier regularizer with increasing learning rate to explicitly ensure a large negative regret term based on the intuition above.
This motivates us to add an extra log-barrier regularizer to ONS for our problem.
Specifically, we define our regularizer to be the following mixture:  
$\psi_t(x)\triangleq\frac{\beta}{2}\norm{x}^2_{A_t} + \sum_{i=1}^N \frac{1}{\eta_{t,i}} \ln\frac{1}{x_i}$,
where $\eta_{t,i}$ is individual and time-varying learning rate.
Different from previous work, we propose a more adaptive tuning schedule for these learning rates based on Eq.~\eqref{eq:eta}
(instead of a doubling schedule~\cite{agarwal2017corralling, wei2018more}),
but the key idea is the same: increase the learning rate for a stock when its weight is small so that the algorithm learns faster
in case the stock becomes better in the future. 
Another modification is that we force the decision set to be $\bar{\Delta}_N=\{x\in \Delta_N: x_i \geq \frac{1}{NT}, \;\forall i\}$
instead of $\Delta_N$ to ensure an explicit lower bound for $x_{t,i}$.
We call this algorithm BARrier-Regularized Online Newton Step (\alg) (see Algorithm~\ref{algorithm:base}).

Under the same condition on $\beta$ as for ONS, we prove the following key theorem for \alg 
which highlights the important negative regret term obtained from the extra log-barrier regularizer.
Note that it is enough to provide a regret bound only against smooth CRP $u \in \bar{\Delta}_N$
since one can verify that the total loss of any CRP $u\in \Delta_N$ can be approximated 
by a smooth CRP in $\bar{\Delta}_N$ up to an additive constant of 2 (Lemma~\ref{lemma:uu'} in Appendix~\ref{appendix:proofs}).

\begin{theorem}
\label{theorem:main}
For any $u \in \bar{\Delta}_N$, if $\beta \leq \alpha_T(u)$ for $\alpha_t(u)\triangleq \min\big\{\frac{1}{2}, \min_{s\in[t]}\frac{1}{8|(u-x_s)^\top \nb_s |}\big\}$, 
then \alg ensures
\begin{align}
\Reg(u) \leq \mathcal{O}\left( \frac{N\ln T}{\eta} \right) + \frac{8N\ln T}{\beta} - \frac{1}{8(\ln T)\eta} \sum_{i=1}^N \max_{t\in[T]} \frac{u_i}{x_{t,i}}. \label{eq:regret_bound}
\end{align}
\end{theorem}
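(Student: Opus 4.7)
The plan is to marry the classical Online Newton Step analysis with the ``increasing learning rate'' log-barrier technique of~\citep{agarwal2017corralling, wei2018more}. First, under the hypothesis $\beta \le \alpha_T(u)$, I will invoke the standard exp-concavity-style inequality for the log-loss (as in the analysis of ONS in~\citep{hazan2007logarithmic}),
\[
f_t(x_t) - f_t(u) \;\le\; \nabla_t^\top(x_t - u) \;-\; \tfrac{\beta}{2}\bigl(\nabla_t^\top(u - x_t)\bigr)^2,
\]
for every $t$. Summing reduces the problem to bounding $\sum_t \nabla_t^\top(x_t-u)$ while tracking the extra negative quadratic. I then apply the textbook OMD decomposition with time-varying regularizer on the constrained domain $\bar\Delta_N$:
\[
\sum_t \nabla_t^\top(x_t-u) \;\le\; D_{\psi_1}(u,x_1) + \sum_{t=2}^T\!\bigl[D_{\psi_t}(u,x_t)-D_{\psi_{t-1}}(u,x_t)\bigr] + \sum_{t=1}^T\!\bigl[\nabla_t^\top(x_t-x_{t+1}) - D_{\psi_t}(x_{t+1},x_t)\bigr].
\]

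I will then analyze each piece in turn. Because the Bregman divergence is linear in its generating function and $A_t - A_{t-1} = \nabla_t\nabla_t^\top$, the change-of-regularizer term splits as
\[
D_{\psi_t}(u,x_t)-D_{\psi_{t-1}}(u,x_t) \;=\; \tfrac{\beta}{2}\bigl(\nabla_t^\top(u-x_t)\bigr)^2 \;+\; \sum_i\Bigl(\tfrac{1}{\eta_{t,i}}-\tfrac{1}{\eta_{t-1,i}}\Bigr)D_{\ln}(u_i,x_{t,i}),
\]
with $D_{\ln}(a,b) = a/b - 1 - \ln(a/b)$. The first summand \emph{exactly cancels} the negative quadratic produced in the first step, while the second is non-positive because $\eta_{t,i}$ is non-decreasing in $t$ by construction~\eqref{eq:eta}. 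For the initial divergence, $A_0 = N I_N$, $x_1 = \one/N$, and $u_i \ge 1/(NT)$ together give $D_{\psi_1}(u,x_1) = O(\beta N) + \tfrac{1}{\eta}\sum_i D_{\ln}(u_i,1/N) = O(\beta N + N\ln T/\eta)$. For the stability sum I will discard the non-negative log-barrier part of $D_{\psi_t}(x_{t+1},x_t)$ and keep only $\tfrac{\beta}{2}\|x_{t+1}-x_t\|_{A_t}^2$; Cauchy--Schwarz plus AM--GM then give a per-round bound of $\tfrac{1}{2\beta}\|\nabla_t\|_{A_t^{-1}}^2$. Since $x_{t,i} \ge 1/(NT)$ forces $\|\nabla_t\|_\infty \le NT$, the standard determinant lemma yields $\sum_t \|\nabla_t\|_{A_t^{-1}}^2 \le \log(\det A_T/\det A_0) = O(N\ln T)$, giving an $O(N\ln T/\beta)$ stability contribution.

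The main obstacle will be to convert the already-negative log-barrier change term into the explicit bound $-\tfrac{1}{8\ln T\,\eta}\sum_i\max_t(u_i/x_{t,i})$ claimed in the theorem. Fixing a coordinate $i$ and letting $t^\star = \argmin_t x_{t,i}$, I will rewrite $\tfrac{1}{\eta_{t,i}} = \tfrac{1}{\eta}\min_{s\le t}(Nx_{s,i})^{1/\ln T}$ so that the jumps $\tfrac{1}{\eta_{t-1,i}}-\tfrac{1}{\eta_{t,i}}$ are supported on rounds where $x_{t,i}$ sets a new minimum and telescope to total mass $\tfrac{1 - (Nx_{t^\star,i})^{1/\ln T}}{\eta}$. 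Writing $v = \ln(1/(Nx_{t^\star,i}))/\ln T \in [0,1]$, the elementary inequality $1-e^{-v} \ge v/2$ turns this mass into at least $\tfrac{1}{2\eta\ln T}\ln\tfrac{1}{Nx_{t^\star,i}}$, and this is precisely where the $(\ln T)^{-1}$ factor in the theorem originates. Coupling it with the lower bound $D_{\ln}(u_i,x_{t^\star,i}) \ge u_i/(4x_{t^\star,i})$, which holds whenever $u_i/x_{t^\star,i}\ge 4$ via $\ln z \le z/2$ for $z\ge 1$, delivers the desired contribution of order $u_i/(\ln T\,\eta\,x_{t^\star,i})$ per coordinate in the significant regime; the complementary regime $\max_t u_i/x_{t,i} = O(1)$ makes the negative term of magnitude only $O(N/(\ln T\,\eta))$, which is harmlessly absorbed into the $O(N\ln T/\eta)$ initial-divergence term. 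The calibration here is delicate: the exponent $1/\ln T$ in~\eqref{eq:eta} simultaneously keeps $\eta_{t,i} \le e\eta$ uniformly bounded (so the initial-divergence penalty stays $O(N\ln T/\eta)$) and guarantees that the telescoped jump carries a $\Theta(1/\ln T)$ factor, which is the ultimate source of the $(\ln T)^{-1}$ in the negative regret term.
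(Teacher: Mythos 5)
Your overall structure matches the paper's: the exp-concavity inequality $f_t(x_t)-f_t(u)\le\nabla_t^\top(x_t-u)-\tfrac{\beta}{2}(\nabla_t^\top(u-x_t))^2$, the standard OMD telescoping with time-varying regularizer, the exact cancellation of the $+\tfrac{\beta}{2}(\nabla_t^\top(u-x_t))^2$ increment against the exp-concavity negative quadratic, and the stability sum $\sum_t[\nabla_t^\top(x_t-x_{t+1})-D_{\psi_t}(x_{t+1},x_t)]$ bounded by the log-determinant lemma. Your stability step (dropping the log-barrier part of $D_{\psi_t}(x_{t+1},x_t)$ and applying Cauchy--Schwarz + AM--GM) is a slightly different, in fact cleaner, route than the paper's Taylor-expansion/first-order-optimality argument in its Lemma 8, and it yields the same $O(N\ln T/\beta)$ bound; that part is fine.

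The genuine gap is in the conversion of the log-barrier change term into the explicit negative bound. You compute the \emph{total} telescoped mass $\sum_m(1/\eta_{t_{m-1},i}-1/\eta_{t_m,i}) = \bigl(1-(Nx_{t^\star,i})^{1/\ln T}\bigr)/\eta \ge \tfrac{1}{2\eta\ln T}\ln\tfrac{1}{Nx_{t^\star,i}}$, and then ``couple'' it with the single value $D_{\ln}(u_i,x_{t^\star,i})$. But the quantity you must lower-bound is $\sum_m(\text{jump}_m)\cdot h(u_i/x^{(m)})$, where the jumps are spread over rounds at which $h(u_i/x^{(m)})$ can be much smaller than $h(u_i/x^{(M)})$; it is simply false that $\sum_m(\text{jump}_m)h_m \ge (\sum_m\text{jump}_m)\,h_M$ when $h_m$ is nondecreasing. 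Concretely, if $x_{t,i}$ decays slowly (e.g.\ geometrically by a factor $1-\epsilon$ each update), the true sum is of order $u_i/(\eta\ln T\,x_{t^\star,i})$ while your ``coupled'' expression is larger by an extra factor of $\ln\tfrac{1}{Nx_{t^\star,i}}$, which can be as large as $\ln T$; since you need a \emph{lower} bound on $\sum_m\text{jump}_m h_m$ to get the negative regret term, this overestimate is invalid. The paper fixes exactly this by singling out the index $m^\star$ at which $x^{(m)}$ first drops below $2x^{(M)}$: for $m\ge m^\star$ all the $h$-values are $\ge h(u_i/(2x^{(M)}))$, and the (log-converted) mass accumulated between $m^\star$ and $M$ is still $\ge 1/(4\eta\ln T)$ because $\log_2(x^{(m^\star-1)}/x^{(M)})\ge 1$; pairing that restricted mass with the nearly-maximal $h$-value is what makes the argument rigorous. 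You need this (or an equivalent Abel-summation/partial-summation device) to close the gap; the total-mass-times-final-$h$ step does not.

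One minor point to tidy up if you write this out fully: you use $D_{\psi_1}(u,x_1)$ as the initial term rather than $D_{\psi_0}(u,x_1)$, and $A_1$ already contains $\nabla_1\nabla_1^\top$, so $D_{\psi_1}(u,x_1)$ by itself is $O(\beta N^2 + N\ln T/\eta)$ rather than $O(\beta N + N\ln T/\eta)$; this is harmless because the extra $\tfrac{\beta}{2}(\nabla_1^\top(u-x_1))^2$ is cancelled by the $t=1$ term of the exp-concavity negative quadratic, but the bookkeeping as written is not quite self-consistent.
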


The second term of Eq.~\eqref{eq:regret_bound} comes from ONS while the rest comes from the log-barrier.
To see why the negative term is useful, for a moment assume that we were able to pick $\beta$ such that 
$\frac{1}{2}\alpha_T(u^*)\leq \beta \leq \alpha_T(u^*)$ where $u^*$ is the best (smoothed) CRP.
Then by setting $\eta = \frac{1}{1024N(\ln T)^2}$, the regret against $u^*$ can be upper bounded by 
\begin{align*}
&\mathcal{O}\left( \frac{N\ln T}{\eta} \right) + \frac{16N\ln T}{\alpha_T(u^*)}- \frac{1}{8(\ln T)\eta} \sum_{i=1}^N \max_{t\in[T]} \frac{u^*_i}{x_{t,i}}\\ 
\leq & ~ \mathcal{O}\left( \frac{N\ln T}{\eta} \right) + 128N(\ln T)\max_{t\in[T]} \Bigg|\frac{\inner{r_t, u^*-x_t}}{\inner{r_t, x_t}}\Bigg|+32N\ln T - \frac{1}{8(\ln T)\eta} \sum_{i=1}^N \max_{t\in[T]} \frac{u^*_i}{x_{t,i}}\\ 
\leq & ~ \mathcal{O}\left(\frac{N\ln T}{\eta}\right)+128N(\ln T)\left(\max_{t\in[T],i}\frac{u^*_i}{x_{t,i}} + 1 \right)+32N\ln T-128N(\ln T)\sum_{i=1}^N \max_{t\in[T]} \frac{u^*_i}{x_{t,i}}\\
\leq & ~ \mathcal{O}\left(N^2(\ln T)^3\right), 
\end{align*}

which completely eliminates the dependence on the largest gradient norm $G$! 

The problem is, of course, it is not clear at all how to tune $\beta$ in this way ahead of time.
On a closer look, it is in fact not even clear whether such $\beta$ exists
since $\alpha_T(u^*)$ depends on the sequence $x_1, \ldots, x_T$ and thus also on $\beta$ itself (see Appendix~\ref{appendix:discussions} for more discussions).
Assuming its existence, a natural idea would be to run many copies of \alg with different $\beta$
and to choose them adaptively via another online learning algorithm such as Hedge~\citep{FreundSc97}.
We are unable to analyze this method due to some technical challenges (discussed in Appendix~\ref{appendix:discussions}),
let alone the fact that it leads to much higher time complexity making the algorithm impractical.
In the next section, however, we completely resolve this issue via an adaptive tunning scheme for $\beta$.

\section{\adaalg}
\label{section:adaptive beta}
Our main idea to resolve the parameter tuning issue is based on a more involved doubling trick. 
As dicussed we would like to set $\beta$ to be roughly $\alpha_T(u_T^*)$ where $u_t^* = \min_u \sum_{s\leq t} f_s(u)$.
A standard doubling trick would suggest halving $\beta$ whenever it is larger than $\alpha_t(u_t^*)$ and then restart the algorithm.
However, since $\alpha_t(u_t^*)$ is not monotone in $t$, standard analysis of doubling trick does not work.

Fortunately, due to the special structure of our problem, we are able to analyze a slight variant of the above proposal
where we halve the value of $\beta$ whenever it is larger than $\alpha_t(u_t)$, for the {\it regularized leader} $u_t$ (defined in Eq.~\eqref{eq:u_t})
instead of the actual leader $u_t^*$.
The regularization used here to compute $u_t$ is again the log-barrier, but the purpose of using log-barrier
is simply to ensure the stability of $u_t$ as discussed later. 
In fact, $u_t$ is exactly the prediction of the FTRL approach of~\citep{agarwal2005efficient}, up to a different value of the parameter $\gamma$.
Here we only use $u_t$ to assist the tunning of $\beta$.

We call the final algorithm \adaalg (see Algorithm~\ref{alg:ada}).
Note that for notational simplicity, we reset the time index back to $1$ at the beginning of each rerun,
that is, the algorithm forgets all the previous data.

\begin{algorithm}[t]
 \caption{\adaalg}
 \label{alg:ada}
\textbf{Initialize}: $\beta=\frac{1}{2}, \eta=\frac{1}{2048 N (\ln T)^2}$, $\gamma=\frac{1}{25}$ \\
    Run \alg with parameter $\beta$ and $\eta$, where after each round $t$, if the following holds: 
    \begin{align}
       \beta > \alpha_t(u_t),
       \label{check_condition}
    \end{align}
    with $\alpha_t$ defined in Theorem~\ref{theorem:main}
    and 
    \begin{equation}\label{eq:u_t}
    u_t=\argmin_{u\in\bar{\Delta}_N} \sum_{s=1}^{t} f_s(u) + \frac{1}{\gamma} \sum_{i=1}^N \ln \frac{1}{u_i}, 
    \end{equation}
    then set $\beta\leftarrow \frac{\beta}{2}$, and rerun \alg from Line~\ref{rerun_label} with time index reset to $1$. \label{rerun_label} 
\end{algorithm}

To see why this works, suppose condition \eqref{check_condition} holds at time $t$ and triggers the restart. Then we know $\alpha_t(u_t) < \beta\leq \alpha_{t-1}(u_{t-1})$. On one hand, this implies that the condition of Theorem~\ref{theorem:main} holds at time $t-1$ for $u_{t-1}$, so the regret bound \eqref{eq:regret_bound} holds for $u_{t-1}$; 
on the other hand, this also implies that the term $\frac{N\ln T}{\beta}$ in Eq.~\eqref{eq:regret_bound} is bounded by $\frac{N\ln T}{\alpha_t(u_t)}$,
which further admits an upper bound in terms of $\max_{s\in[t],i\in[N]}\frac{u_{t,i}}{x_{s,i}}$ by the same calculation shown after Theorem~\ref{theorem:main}.
Therefore, if we can show $\max_{s\in[t],i\in[N]}\frac{u_{t,i}}{x_{s,i}} \approx \max_{s\in[t-1],i\in[N]}\frac{u_{t-1,i}}{x_{s,i}}$, 
then the same cancellation will happen which leads to small regret against $u_{t-1}$ for this period. 
It is also not hard to see that $u_{t-1}$ will have similar total loss compared to the actual best CRP $u_{t-1}^*$, leading to the desired regret bound overall.

Indeed, we show in Appendix~\ref{appendix:proofs} that both $x_t$ and $u_t$ enjoy a certain kind of stability,
which then implies the following lemma.

\begin{lemma}\label{lem:stability_a_t}
If condition~\eqref{check_condition} holds at time $t$, then $\max_{s\in[t-1],i\in[N]}\frac{u_{t-1,i}}{x_{s,i}}\geq \frac{1}{2}\max_{s\in[t],i\in[N]}\frac{u_{t,i}}{x_{s,i}}$.
\end{lemma}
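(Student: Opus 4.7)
The plan is to reduce the lemma to two multiplicative stability estimates---one for the regularized leader $u_t$ and one for the \alg iterate $x_t$---and then combine them through a short case analysis on where the maximum on the right-hand side is attained.

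Let $(s^*,i^*)\in[t]\times[N]$ attain $\max_{s\in[t],i\in[N]} u_{t,i}/x_{s,i}$. If $s^*\leq t-1$, then the pair $(s^*,i^*)$ is also a valid index on the left-hand side, so it suffices to prove $u_{t,i^*}/u_{t-1,i^*}\leq 2$. If instead $s^*=t$, then taking the pair $(t-1,i^*)$ on the left-hand side, it suffices to prove
\[
\frac{u_{t,i^*}}{u_{t-1,i^*}}\cdot\frac{x_{t-1,i^*}}{x_{t,i^*}}\leq 2.
\]
Both subcases are covered by the conjunction of two coordinatewise stability statements: (a) $u_{t,i}/u_{t-1,i}\leq c_u$ for all $i$, and (b) $x_{t-1,i}/x_{t,i}\leq c_x$ for all $i$, with $c_u\leq 2$ and $c_uc_x\leq 2$ (ideally with both constants close to $1$).

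For (a), I would exploit that $u_t$ minimizes an FTRL objective regularized by a log-barrier of strength $1/\gamma$ with $\gamma=1/25$ chosen small. The log-barrier $(1/\gamma)\sum_i\ln(1/u_i)$ is self-concordant and acts as a strong multiplicative anchor, so passing from $u_{t-1}$ to $u_t$ by adding a single log loss $f_t$ perturbs the minimizer only slightly in a multiplicative sense. A standard argument comparing the first-order optimality conditions at $u_{t-1}$ and $u_t$, using that $\nabla f_t(u)=-r_t/\inner{u,r_t}$ is absorbed by the log-barrier (coordinatewise, $|\nabla f_t(u)_i|\cdot u_i\leq 1$), should yield $u_{t,i}/u_{t-1,i}\leq 1+O(\gamma)$, which is well below $\sqrt{2}$ for $\gamma=1/25$. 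A fully analogous calculation handles (b): the log-barrier component $\sum_i(1/\eta_{t-1,i})\ln(1/x_i)$ of $\psi_{t-1}$ plays the same anchoring role for the OMD step $x_{t-1}\to x_t$, so the ratio $x_{t-1,i}/x_{t,i}$ can be controlled by a quantity proportional to $\eta_{t-1,i}\,|\nabla_{t-1,i}|\,x_{t-1,i}$. Since $|\nabla_{t-1,i}|\,x_{t-1,i}=r_{t-1,i}x_{t-1,i}/\inner{x_{t-1},r_{t-1}}\leq 1$ and $\eta_{t-1,i}\leq e\eta=O(1/(N(\ln T)^2))$ (because $x_{s,i}\geq 1/(NT)$ forces the adaptive factor in $\eta_{t-1,i}$ to be at most $e$), this product is much smaller than $1$, yielding $x_{t-1,i}/x_{t,i}\leq 1+o(1)$.

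The main obstacle I expect is step (b). Unlike in the standard analysis of OMD with a fixed log-barrier, the regularizer $\psi_t$ is non-stationary not only through the growing matrix $A_t$ but also through the individualized, non-monotone learning rates $\eta_{t,i}$, which complicates the usual Bregman-divergence derivation of multiplicative stability. Moreover, the quadratic term $(\beta/2)\|x\|^2_{A_t}$ in $\psi_t$ is not separable across coordinates, so one cannot simply run a coordinatewise log-barrier computation as in prior work; one has to show that the log-barrier still dominates the quadratic part (which is the purpose of choosing $\eta\ll\beta$) and that the increasing-learning-rate schedule is compatible with the first-order optimality condition defining $x_t$. Once this is done, the composition of the two ratios in the hard subcase $s^*=t$ is bounded by $(1+O(\gamma))(1+o(1))\leq 2$, and the lemma follows.
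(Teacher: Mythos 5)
Your proposal matches the paper's argument: the same case split on where $\max_{s\in[t],i} u_{t,i}/x_{s,i}$ is attained (at $s^*\le t-1$ versus $s^*=t$), reduced to the same two per-coordinate multiplicative stability statements for $u_t$ (FTRL with log-barrier, Lemma~\ref{stabilityu}) and for $x_t$ (OMD with log-barrier, Lemma~\ref{stabilityx}), which the paper establishes via a Dikin-ellipsoid/Newton-decrement argument that is essentially the first-order-optimality comparison you sketch. One small rate clarification: the self-concordance argument yields $1\pm O(\sqrt{\gamma})$ and $1\pm O(\sqrt{\eta})$ rather than $1+O(\gamma)$, and the non-separable quadratic $\tfrac{\beta}{2}\|x\|^2_{A_t}$ causes no trouble because it is positive semidefinite and so can simply be dropped when lower-bounding $\nabla^2 F_t$ by the diagonal log-barrier Hessian---no $\eta\ll\beta$ condition is actually needed; with $\gamma=1/25$ the combined factor $\frac{1-\sqrt{3\eta}/2}{1+\sqrt{\gamma}/2}$ is still comfortably at least $1/2$, so your conclusion stands.
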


Call the period between two restart an \textit{epoch} and use the notation $\epo(\beta)$ to indicate the epoch that runs with parameter $\beta$. 
We then prove the following key lemma based on the discussions above.

\begin{lemma}
\label{lemma:epoch_regret}
For any $u\in \bar{\Delta}_N$, if $\text{epoch}(\beta)$ is not the last epoch, then we have
\begin{align*}
\sum_{s\in \text{epoch}(\beta)} \left( f_s(x_s) - f_s(u) \right) \leq \mathcal{O}\left(N^2(\ln T)^3\right)-\frac{8 N\ln T}{\beta}; 
\end{align*}
otherwise, 
\begin{align*}
\sum_{s\in \text{epoch}(\beta)} \left( f_s(x_s) - f_s(u) \right) \leq \mathcal{O}\left(N^2(\ln T)^3\right)+\frac{8 N\ln T}{\beta}. 
\end{align*}
\end{lemma}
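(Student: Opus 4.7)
The plan is to apply Theorem~\ref{theorem:main} inside each epoch, convert the comparator from a regularized leader $u_t$ to an arbitrary $u\in\bar\Delta_N$ via an FTRL-style optimality argument, and, for a non-last epoch, use the trigger condition together with Lemma~\ref{lem:stability_a_t} to absorb the positive $\tfrac{8N\ln T}{\beta}$ term into the log-barrier negative regret, producing a net $-\tfrac{8N\ln T}{\beta}$.

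Concretely, for a non-last epoch (with local time indexing, length $\tau$), the fact that the trigger fires at time $\tau$ but not earlier means $\beta\le \alpha_{\tau-1}(u_{\tau-1})$ and $\beta > \alpha_\tau(u_\tau)$. First I would apply Theorem~\ref{theorem:main} up to round $\tau-1$ with comparator $u_{\tau-1}$, obtaining a bound with the positive $\tfrac{8N\ln T}{\beta}$ plus the negative term $-\tfrac{1}{8(\ln T)\eta}\sum_i\max_{s\le\tau-1}\tfrac{u_{\tau-1,i}}{x_{s,i}}$. Next, since $u_{\tau-1}$ is the FTRL minimizer in Eq.~\eqref{eq:u_t} and every $u\in\bar\Delta_N$ satisfies $\ln(1/u_i)\le\ln(NT)$, I get
\[
\sum_{s=1}^{\tau-1}\bigl(f_s(u_{\tau-1}) - f_s(u)\bigr) \le \tfrac{1}{\gamma}\sum_i\ln\tfrac{1}{u_i} \le O(N\ln T),
\]
which replaces $u_{\tau-1}$ by an arbitrary $u\in\bar\Delta_N$. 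The last round $\tau$ of the epoch contributes at most $O(\ln T)$ because $x_\tau,u\in\bar\Delta_N$ and $\|r_\tau\|_\infty=1$ force $f_\tau(\cdot)\in[0,\ln(NT)]$.

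The heart of the argument is then Steps 4--5: using the trigger. Since $\nabla_s=-r_s/\langle x_s,r_s\rangle$, the condition $\beta>\alpha_\tau(u_\tau)$ gives $\tfrac{1}{\beta}< 8\bigl|\tfrac{\langle u_\tau-x_s,r_s\rangle}{\langle x_s,r_s\rangle}\bigr|\le 8(\max_i \tfrac{u_{\tau,i}}{x_{s,i}}+1)$ for some $s\le\tau$. Applying Lemma~\ref{lem:stability_a_t} to pass from $u_\tau$ to $u_{\tau-1}$ yields $\tfrac{1}{\beta}\le 16M+8$, where $M\triangleq\max_{s\le\tau-1,i}\tfrac{u_{\tau-1,i}}{x_{s,i}}$. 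With $\eta=\tfrac{1}{2048N(\ln T)^2}$ the prefactor of the negative term becomes $\tfrac{1}{8(\ln T)\eta}=256N\ln T$, and since $\sum_i\max_s(\cdot)\ge\max_{s,i}(\cdot)=M$ the negative contribution is at most $-256N\ln T\cdot M$. Combining,
\[
\tfrac{8N\ln T}{\beta}-256N\ln T\cdot M \le 128N\ln T\cdot M+64N\ln T-256N\ln T\cdot M = -128N\ln T\cdot M + O(N\ln T),
\]
and re-expressing $-128N\ln T\cdot M\le -\tfrac{8N\ln T}{\beta}+O(N\ln T)$ via $M\ge \tfrac{1}{16\beta}-\tfrac12$ gives the stated bound $O(N^2(\ln T)^3)-\tfrac{8N\ln T}{\beta}$ after absorbing the $O(N\ln T/\eta)=O(N^2(\ln T)^3)$ and all other $O(N\ln T)$ terms.

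For the last epoch the trigger never fires, so $\beta\le\alpha_t(u_t)$ throughout. I would simply apply Theorem~\ref{theorem:main} with $u=u_{\text{last}}$ up to the end of the epoch, perform the same FTRL conversion to an arbitrary $u\in\bar\Delta_N$, and drop the nonpositive log-barrier term, obtaining $O(N^2(\ln T)^3)+\tfrac{8N\ln T}{\beta}$. The main obstacle is getting the constants right in the cancellation: the factor $256$ from the choice of $\eta$ must exceed $128$ (the coefficient coming from the trigger inequality) by at least a factor of two so that a $-\tfrac{8N\ln T}{\beta}$ term survives rather than merely canceling; this is exactly why the proposed $\eta$ is set to $\tfrac{1}{2048N(\ln T)^2}$ and why Lemma~\ref{lem:stability_a_t} is essential for relating $u_\tau$ to the comparator $u_{\tau-1}$ actually used in Theorem~\ref{theorem:main}.
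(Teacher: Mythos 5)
Your proof is correct and follows essentially the same route as the paper: apply Theorem~\ref{theorem:main} at time $\tau-1$ with comparator $u_{\tau-1}$ (valid since the trigger had not fired there), pass from $u_{\tau-1}$ to an arbitrary $u\in\bar\Delta_N$ via FTRL optimality of $u_{\tau-1}$ (which the paper packages in the function $\Gamma_{t}$), bound the final round by $\ln(NT)$, and then use the trigger inequality $\tfrac{1}{\beta}\le 16a_{\tau-1}+8$ (via Lemma~\ref{lem:stability_a_t}) together with the choice $\eta=\tfrac{1}{2048N(\ln T)^2}$ so that the negative log-barrier term overcompensates $\tfrac{8N\ln T}{\beta}$ by exactly another $\tfrac{8N\ln T}{\beta}$. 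The paper performs the cancellation in a single substitution $a_{\tau-1}\ge\tfrac{1}{16\beta}-\tfrac12$, while you do it in two equivalent steps; the constants and the treatment of the last epoch match.
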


With this key lemma, we finally prove the claimed regret bound of \adaalg.
\begin{theorem}
\adaalg ensures $\Reg(u) \leq \mathcal{O}\left( N^2(\ln T)^4 \right)$ for any $u\in\Delta_N$. 
\end{theorem}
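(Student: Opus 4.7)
The plan is to combine the per-epoch regret bound of Lemma~\ref{lemma:epoch_regret} with two bookkeeping observations: the number of epochs is logarithmic in $NT$, and the $\pm \frac{8N\ln T}{\beta}$ terms telescope geometrically across epochs, leaving only an $\mathcal{O}(N\ln T)$ residual. Beyond invoking Lemma~\ref{lemma:epoch_regret}, the remaining work is elementary.

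First I would reduce to $u \in \bar{\Delta}_N$. By Lemma~\ref{lemma:uu'}, any $u \in \Delta_N$ is matched by some smoothed $u' \in \bar{\Delta}_N$ with $\sum_t f_t(u') \leq \sum_t f_t(u) + 2$, so $\Reg(u) \leq \Reg(u') + 2$. Since Lemma~\ref{lemma:epoch_regret} applies to any fixed $u' \in \bar{\Delta}_N$, I fix such a comparator and use it throughout every epoch.

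Next I would bound the number of epochs $K$. Because $\beta$ is halved at every restart starting from $\frac{1}{2}$, the successive values are $\beta_k = 2^{-k}$ for $k=1,\dots,K$. To cap $K$, I need a uniform lower bound on $\alpha_t(u_t)$. From $x_s \in \bar{\Delta}_N$, the normalization $\max_i r_{s,i}=1$, and $\nb_s = -r_s/\inner{x_s,r_s}$, one gets $\|\nb_s\|_\infty \leq NT$. Combined with $\|u_t - x_s\|_1 \leq 2$, this yields $|(u_t-x_s)^\top \nb_s| \leq 2NT$, and hence $\alpha_t(u_t) \geq \frac{1}{16NT}$. Thus condition~\eqref{check_condition} can trigger at most $K = \mathcal{O}(\ln(NT)) = \mathcal{O}(\ln T)$ times (using $T>N$).

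Now I sum the per-epoch bounds from Lemma~\ref{lemma:epoch_regret}. The $\mathcal{O}(N^2(\ln T)^3)$ terms contribute $K \cdot \mathcal{O}(N^2(\ln T)^3) = \mathcal{O}(N^2(\ln T)^4)$. The $\beta$-dependent terms, namely $-\frac{8N\ln T}{\beta_k}$ for $k<K$ and $+\frac{8N\ln T}{\beta_K}$ for $k=K$, combine to
\[
\frac{8N\ln T}{\beta_K} - \sum_{k=1}^{K-1}\frac{8N\ln T}{\beta_k} \;=\; 8N\ln T\left(2^K - \sum_{k=1}^{K-1} 2^k\right) \;=\; 16N\ln T,
\]
using $\sum_{k=1}^{K-1} 2^k = 2^K-2$. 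Together with the additive $+2$ from the smoothing step, this gives $\Reg(u) = \mathcal{O}(N^2(\ln T)^4)$.

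The real obstacle has already been absorbed into Lemma~\ref{lemma:epoch_regret}, whose proof does the substantive work: coupling the cancellation computation after Theorem~\ref{theorem:main} with the stability estimate Lemma~\ref{lem:stability_a_t}, and checking that at a restart the comparator $u_{t-1}$ against which regret is measured is essentially as good as the true leader. At this outer layer the only delicate point is the telescoping, which works precisely because $\frac{1}{\beta_k}$ grows geometrically with ratio $2$, so the positive tail contribution from the last epoch is dominated by the sum of negative contributions from earlier epochs up to an $\mathcal{O}(N\ln T)$ remainder.
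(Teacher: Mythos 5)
Your proposal is correct and follows essentially the same route as the paper: reduce to $u\in\bar\Delta_N$ via Lemma~\ref{lemma:uu'}, bound the epoch count by $\mathcal{O}(\ln(NT))$ using $\alpha_t(u_t)\ge\Omega(1/(NT))$, sum the per-epoch bounds from Lemma~\ref{lemma:epoch_regret}, and observe that the $\pm 8N\ln T/\beta$ terms telescope to $16N\ln T$ because $1/\beta_k=2^k$ grows geometrically. The only cosmetic difference is that you derive the $\Omega(1/(NT))$ lower bound on $\alpha_t$ via a Hölder argument on $\|\nabla_s\|_\infty$ and $\|u_t-x_s\|_1$ rather than the paper's $\max_{s,i}u_{t,i}/x_{s,i}\le\mathcal{O}(NT)$ phrasing, but these give the same constant up to a factor.
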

\begin{proof}
Again by Lemma~\ref{lemma:uu'} it suffices to consider $u\in\bar{\Delta}_N$.
Let the number of epochs be $B$.  When $B=1$, the bound holds trivially by Lemma~\ref{lemma:epoch_regret}. 
Otherwise, the regret is upper bounded by 
\begin{align*}
B\times \mathcal{O}\left(N^2(\ln T)^3\right) + \left( \sum_{b=1}^{B-1} \frac{-8}{2^{-b}} + \frac{8}{2^{-B}}\right)N \ln T= B\times \mathcal{O}(N^2(\ln T)^3)+16N\ln T.  
\end{align*}
Since for any $t\in [T]$ and $u\in\bar{\Delta}_N$, $\frac{1}{\alpha_t(u)}\leq \order(\max_{s\in[t], i}\frac{u_{t,i}}{x_{s,i}}) \leq \order(NT)$, 
which means $\alpha_t(u) \geq \Omega(\frac{1}{NT})$, condition~\eqref{check_condition} cannot hold after $\mathcal{O}(\ln(NT))$ epochs. 
Therefore $B=\order(\ln(NT)) = \order(\ln T)$ (since $T > N$) and the regret bound follows.
\end{proof}

\textbf{Computational complexity.} 
It is clear that the computational bottleneck of our algorithm is to solve the optimization problems defined by Eq.~\eqref{eq:OMD} and Eq.~\eqref{eq:u_t}.
Suppose we use Interior Point Method to solve these two problems. 
It takes time $\mathcal{O}\left(M\sqrt{N}\log\frac{N}{\epsilon}\right)$ to obtain $1-\epsilon$ accuracy where $M$ is the time complexity
to compute the gradient and Hessian inverse of the objective~\cite{boyd2004convex}, 
which in our case is $\mathcal{O}(N^3)$ for solving $x_t$ and $\mathcal{O}(TN^2+N^3)$ for solving $u_t$.
As $T > N$, the complexity per round is therefore $\mathcal{O}(TN^{2.5})$
ignoring logarithmic factors. 
We note that this is only a pessimistic estimation and faster implementation is highly possible, especially for solving $u_t$ (given $u_{t-1}$)
in light of the efficient implementation discussed in~\citep{AbernethyHaRa12} for similar problems. 

\section{Detailed Analysis}
\label{sec:proofs}

In this section we provide the key proofs for our results.

\paragraph{Analysis of \alg}
The proof of Theorem~\ref{theorem:main} is a direct combination of the following three lemmas,
where the first one is by standard OMD analysis and analysis from~\citep{hazan2007logarithmic}
and the proof is deferred to Appendix~\ref{appendix:proofs}.

\begin{lemma}\label{lem:first_step}
Under the condition of Theorem~\ref{theorem:main},  \alg ensures for any $u \in \Delta_N$
\[\Reg(u) \leq \sum_{t=1}^T  \left(\inner{ \nb_t,x_t-x_{t+1}}  +D_{\psi_t}(u,x_t)-D_{\psi_t}(u,x_{t+1}) - \frac{\beta}{2}\inner{\nb_t,x_t-u}^2\right).
\] 
\end{lemma}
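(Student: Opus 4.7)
The plan is to combine two standard ingredients: a per-round exp-concavity inequality for the logarithmic loss (the one underlying the ONS analysis of~\citep{hazan2007logarithmic}) and the three-point optimality identity of Online Mirror Descent. Summed over $t=1,\ldots,T$, these two pieces immediately produce the claimed inequality.

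For the first ingredient, since $f_t(x)=-\ln\inner{x,r_t}$ one can rewrite $f_t(u)-f_t(x_t)=-\ln\bigl(1-\inner{\nabla_t,u-x_t}\bigr)$. A direct Taylor estimate of $-\ln(1-z)$, handled separately for $z\geq 0$ and $z<0$, shows that whenever $\beta\leq \tfrac{1}{2}$ and $8\beta\,\abs{\inner{\nabla_t,u-x_t}}\leq 1$ one has
\[
f_t(x_t)-f_t(u)\ \leq\ \inner{\nabla_t,x_t-u}-\tfrac{\beta}{2}\inner{\nabla_t,x_t-u}^2.
\]
The hypothesis $\beta\leq \alpha_T(u)$ from Theorem~\ref{theorem:main} is designed precisely so that both numerical conditions hold at every round $t$, and this is the familiar quadratic improvement driving the ONS-style analysis.

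For the second ingredient, the first-order optimality of the update $x_{t+1}=\argmin_{x\in\bar{\Delta}_N}\{\inner{x,\nabla_t}+D_{\psi_t}(x,x_t)\}$ gives $\inner{\nabla_t+\nabla\psi_t(x_{t+1})-\nabla\psi_t(x_t),\,u-x_{t+1}}\geq 0$ for every $u\in\bar{\Delta}_N$. Combining this with the three-point identity for Bregman divergences and dropping the (non-positive) $-D_{\psi_t}(x_{t+1},x_t)$ term yields the familiar OMD inequality
\[
\inner{\nabla_t,x_t-u}\ \leq\ \inner{\nabla_t,x_t-x_{t+1}}+D_{\psi_t}(u,x_t)-D_{\psi_t}(u,x_{t+1}).
\]
Plugging this into the per-round exp-concavity bound from the previous paragraph and summing over $t$ gives the claim for $u\in\bar{\Delta}_N$. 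For $u\in\Delta_N$ with some $u_i=0$ the log-barrier contribution makes $D_{\psi_t}(u,x_t)=+\infty$ and the bound is vacuous, while the intermediate case $0<u_i<1/(NT)$ is most cleanly handled by appealing to Lemma~\ref{lemma:uu'}, which the paper already uses to replace an arbitrary $u\in\Delta_N$ by a smoothed version in $\bar{\Delta}_N$ at a cost of an additive constant.

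The only step that warrants real care is the exp-concavity inequality: the constants $\tfrac{1}{2}$ and $\tfrac{1}{8}$ baked into $\alpha_t(u)$ need to be matched to the Taylor remainder of $-\ln(1-z)$ on each sign of $z$, and small numerical slippage is easy to incur. Everything else is routine OMD bookkeeping, and in particular the novel mixed form of $\psi_t$ plays no role in this lemma since the three-point Bregman identity holds for any strictly convex $\psi_t$; the specific regularizer only enters when Theorem~\ref{theorem:main} is actually unwound in the later lemmas.
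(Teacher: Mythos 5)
Your proof is correct and mirrors the paper's two-ingredient structure: a per-round quadratic lower bound on $f_t(u)-f_t(x_t)$, plus the standard three-point OMD inequality obtained from first-order optimality of $x_{t+1}$ and non-negativity of the discarded Bregman term. The one place where your derivation genuinely differs is in how that per-round bound is produced. The paper invokes exp-concavity literally: it observes that $h_t(x)=e^{-2\beta f_t(x)}=\inner{x,r_t}^{2\beta}$ is concave when $2\beta\le 1$, linearizes $h_t$ at $x_t$, and takes logs to get $f_t(x_t)-f_t(u)\le\frac{1}{2\beta}\ln\bigl(1-2\beta\inner{\nabla_t,u-x_t}\bigr)$, where the argument of the logarithm is at most $1/4$ in magnitude uniformly in $\beta$ under the $\alpha_T(u)$ hypothesis; a single Taylor bound $-\ln(1-z)\ge z+\tfrac14 z^2$ for $\abs{z}\le 1/4$ (verified via convexity of the difference) then finishes. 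You instead start from the exact log-loss identity $f_t(u)-f_t(x_t)=-\ln\bigl(1-\inner{\nabla_t,u-x_t}\bigr)$ and estimate $-\ln(1-z)$ on the $\beta$-dependent range $\abs{z}\le\tfrac{1}{8\beta}$, which can be large. This also goes through: for $z\ge 0$ one has $\ln(1-z)\le -z-\tfrac12 z^2\le -z-\tfrac{\beta}{2}z^2$ since $\beta\le 1/2$, and for $z=-w<0$ the function $w-\ln(1+w)-\tfrac{\beta}{2}w^2$ has nonnegative derivative on $\bigl[0,\tfrac{1-\beta}{\beta}\bigr]\supseteq\bigl[0,\tfrac{1}{8\beta}\bigr]$ whenever $\beta\le 7/8$; but it requires precisely the $\beta$-dependent sign split you flag as the delicate step. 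The paper's $2\beta$-rescaling buys a $\beta$-independent argument range for the logarithm, which is the main virtue of routing the bound through the concavity of $h_t$; your direct route is slightly more hands-on but arrives at the identical per-round inequality. The OMD half of your argument and your remark about $u\in\Delta_N\setminus\bar{\Delta}_N$ both match the paper.
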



\begin{lemma}\label{lem:negative_term}
\alg with parameters $\beta\leq \frac{1}{2}$ and $\eta\leq 1$ guarantees
\begin{align*}
\sum_{t=1}^T \left(D_{\psi_t}(u,x_t)-D_{\psi_t}(u,x_{t+1}) - \frac{\beta}{2} \inner{\nb_t, x_t-u}^2\right)
\leq \mathcal{O}\left(\frac{N\ln T}{\eta}\right) - \frac{1}{8(\ln T)\eta} \sum_{i=1}^N\max_{t\in[T]}\frac{u_i}{x_{t,i}}.
\end{align*}
\end{lemma}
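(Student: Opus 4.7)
My plan is to split $\psi_t = \phi_t + \chi_t$ into the ONS quadratic part $\phi_t(x)=\tfrac{\beta}{2}\|x\|_{A_t}^2$ and the time-varying log-barrier part $\chi_t(x)=\sum_i \tfrac{1}{\eta_{t,i}}\log(1/x_i)$, and handle the two Bregman-divergence telescopes separately. For the ONS piece I would use the pointwise identity $D_{\phi_t}(u,x_t)-\tfrac{\beta}{2}\inner{\nb_t,x_t-u}^2 = D_{\phi_{t-1}}(u,x_t)$, which follows directly from $\phi_t(x)-\phi_{t-1}(x) = \tfrac{\beta}{2}\inner{\nb_t,x}^2$. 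Plugging this identity in causes the ONS contribution to collapse by telescoping to $D_{\phi_0}(u,x_1) - D_{\phi_T}(u,x_{T+1}) \leq D_{\phi_0}(u,x_1) = \tfrac{\beta N}{2}\|u-x_1\|_2^2 \leq N$, using $A_0 = NI$ and $\beta \leq 1/2$. This $O(N)$ term is subsumed by the $O(N\ln T/\eta)$ slack since $\eta\leq 1$.

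For the log-barrier piece I would use Abel summation coordinate by coordinate. Writing $D_h(a,b)=a/b-1-\log(a/b)\geq 0$, this yields an initialization $\tfrac{1}{\eta}D_h(u_i,1/N)$, a nonpositive boundary term $-\tfrac{1}{\eta_{T,i}}D_h(u_i,x_{T+1,i})$ that I drop, and a \emph{drift} term $\sum_{t=2}^T(\tfrac{1}{\eta_{t,i}}-\tfrac{1}{\eta_{t-1,i}})D_h(u_i,x_{t,i})$. Summed over $i$, the initialization equals $\tfrac{1}{\eta}\sum_i(-\log Nu_i)$, which is at most $\tfrac{N\ln T}{\eta}$ by the defining constraint $u_i\geq 1/(NT)$ of $\bar\Delta_N$. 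The drift term is nonpositive because the schedule~\eqref{eq:eta} makes $1/\eta_{t,i}$ non-increasing in $t$ (only decreasing at rounds when $x_{t,i}$ sets a new minimum).

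The main obstacle is extracting the target negative contribution $-\tfrac{1}{8(\ln T)\eta}\max_t u_i/x_{t,i}$ from this drift. Setting $m_t = N\min_{s\leq t}x_{s,i}$, the schedule gives the clean identity $1/\eta_{t,i} = m_t^{1/\ln T}/\eta$. Let $\mu_0=1>\mu_1>\cdots>\mu_K$ be the successive values of $m_t$ at record-setting times, and pick the smallest index $k^*$ with $\mu_{k^*}\leq e\mu_K$. Since $D_h(u_i,\mu_k/N)$ is nondecreasing in $k$, I would lower-bound the magnitude of the drift by its partial sum over $k\geq k^*$, which telescopes to $D_h(u_i,\mu_{k^*}/N)\cdot(\mu_{k^*-1}^{1/\ln T}-\mu_K^{1/\ln T})/\eta$. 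Then: (i) $\mu_{k^*-1}>e\mu_K$ combined with $e^{1/\ln T}-1\geq 1/\ln T$ produces a factor $1/\ln T$; (ii) $x^*_i := \min_t x_{t,i} \geq 1/(NT)$ gives $\mu_K^{1/\ln T}\geq (1/T)^{1/\ln T} = 1/e$, a positive constant; (iii) the bound $D_h(v)\geq v-1-\log v$ together with $v = Nu_i/\mu_{k^*}\geq v_K/e$ (where $v_K := \max_t u_i/x_{t,i}$) gives $D_h(u_i,\mu_{k^*}/N) = \Omega(v_K)$ in the large-$v_K$ regime, while the residual regime $v_K = O(1)$ contributes at most $O(N/\eta) = O(N\ln T/\eta)$ and is absorbed into the initialization bound. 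Multiplying these three factors yields the claimed drift bound. The delicate feature is that the exponent $1/\ln T$ in~\eqref{eq:eta} is calibrated exactly so that $(1/T)^{1/\ln T}=\Theta(1)$, which is precisely what keeps the coefficient of $\max_t u_i/x_{t,i}$ nonvanishing in the final bound.
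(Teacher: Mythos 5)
Your proposal is correct and takes essentially the same route as the paper: the same $\psi_t=\phi_t+\chi_t$ split, the same Abel summation into an initialization term plus a learning-rate drift term, and the same device of choosing a record-setting cutoff index so that the telescoped increments of $1/\eta_{t,i}$ have constant magnitude while the $1/\ln T$ exponent in~\eqref{eq:eta} keeps $(1/T)^{1/\ln T}=1/e$ bounded away from zero. The differences (using a factor $e$ rather than $2$ at the cutoff, and telescoping $\mu_k^{1/\ln T}$ directly instead of first applying $1-e^z\le -z$ and changing the logarithm base) are cosmetic and yield the same bound.
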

\begin{proof}
Define $ \phi_t(x) = \frac{\beta}{2} \norm{x}_{A_t}^2 $, $ \varphi_t(x) = \sum_{i=1}^N \frac{1}{\eta_{t,i}} \ln\frac{1}{x_i} $. Then $\psi_t(x)=\phi_t(x)+\varphi_t(x)$ and $D_{\psi_t}=D_{\phi_t}+D_{\varphi_t}$. Note that $D_{\phi_t}(x,y)=\frac{\beta}{2}\norm{x-y}^2_{A_t}$ and $D_{\varphi_t}(x,y)=\sum_{i=1}^N \frac{1}{\eta_{t,i}}h\left(\frac{x_i}{y_i}\right)$ where $h(z) = z-1-\ln z$. 
For notation simplicity, we also define $\eta_{0,i}=\eta_{1,i}$ for all $i$. Now we have
\begin{align}
&\sum_{t=1}^T \left(D_{\psi_t}(u,x_t)-D_{\psi_t}(u,x_{t+1})\right)\leq D_{\psi_0}(u,x_1)+\sum_{t=1}^T\left(D_{\psi_{t}}(u,x_{t})-D_{\psi_{t-1}}(u,x_{t})\right) \nonumber\\
\leq & ~ D_{\psi_0}(u,x_1) + 
\frac{\beta}{2}\sum_{t=1}^T\left(\norm{u-x_{t}}_{A_{t}}^2-\norm{u-x_{t}}_{A_{t-1}}^2\right) 
+ \sum_{t=1}^T\sum_{i=1}^N\left( \frac{1}{\eta_{t,i}}-\frac{1}{\eta_{t-1,i}} \right)h\left(\frac{u_{i}}{x_{t,i}}\right) \nonumber \\
= & ~ D_{\psi_0}(u,x_1) 
+ \frac{\beta}{2}\sum_{t=1}^T\inner{\nb_t, u-x_{t}}^2 
+ \sum_{t=1}^T\sum_{i=1}^N\left( \frac{1}{\eta_{t,i}}-\frac{1}{\eta_{t-1,i}} \right)h\left(\frac{u_{i}}{x_{t,i}}\right) \nonumber \\
= & ~ \mathcal{O}\left( \beta N + \frac{N\ln T}{\eta}  \right) +  \frac{\beta}{2}\sum_{t=1}^T\inner{\nb_t, u-x_{t}}^2 
+ \sum_{t=2}^T\sum_{i=1}^N\left( \frac{1}{\eta_{t,i}}-\frac{1}{\eta_{t-1,i}} \right)h\left(\frac{u_{i}}{x_{t,i}}\right). \label{eqn:before_remaining}
\end{align}
It remains to deal with $\sum_{t=2}^T\sum_{i=1}^N\left( \frac{1}{\eta_{t,i}}-\frac{1}{\eta_{t-1,i}} \right)h\left(\frac{u_{i}}{x_{t,i}}\right)$. Fix $i$, let $t=s_1, s_2,\ldots, s_M \in [2,T]$ be the rounds where $ \eta_{t,i} \neq \eta_{t-1,i}$. Define $s_0=1$ and let $\eta^{(m)}=\eta_{s_m, i}$ and $x^{(m)}=x_{s_m,i}$ for notational convenience. Note that by definition $\eta^{(m)} = \eta \exp(\log_T \frac{1}{Nx^{(m)}}) \leq  \eta \exp(\log_T T) = \eta e$. We thus have
\begin{align*}
&\sum_{t=2}^T\left( \frac{1}{\eta_{t,i}}-\frac{1}{\eta_{t-1,i}} \right)h\left(\frac{u_{i}}{x_{t,i}}\right) 
=\sum_{m=1}^M \left( \frac{1}{\eta^{(m)}} - \frac{1}{\eta^{(m-1)}} \right) h\left(\frac{u_i}{x^{(m)}}\right)\\
=&\sum_{m=1}^M \left( \frac{1-\exp\left( \log_T\frac{x^{(m-1)}}{x^{(m)}} \right)    }{\eta^{(m)}}\right)h\left(\frac{u_i}{x^{(m)}}\right)
\leq \sum_{m=1}^M \left( \frac{- \log_T\frac{x^{(m-1)}}{x^{(m)}}}{\eta^{(m)}}\right)h\left(\frac{u_i}{x^{(m)}}\right)\\
\leq & \sum_{m=1}^M \left( - \frac{\log_2\frac{x^{(m-1)}}{x^{(m)}} \Big/ \log_2 T   }{\eta e}\right)h\left(\frac{u_i}{x^{(m)}}\right)
\leq \sum_{m=1}^M \left( - \frac{\log_2\frac{x^{(m-1)}}{x^{(m)}}   }{4(\ln T)\eta }\right)h\left(\frac{u_i}{x^{(m)}}\right).
\end{align*}
We first consider the case when $x^{(M)}\leq \min\left\{\frac{1}{2N},\frac{u_i}{2} \right\}$. Because $ x^{(M)}\leq \frac{1}{2N}=\frac{x^{(0)}}{2}$ and $x^{(m)}$ is decreasing in $m$, there must exist an $m^*\in\{1,2,\ldots,M\}$ such that $ \frac{x^{(m^*-1)}}{x^{(M)}}\geq 2  $ and $ \frac{x^{(m^*)}}{x^{(M)}}\leq 2 $. The last expression can thus be further bounded by 
\begin{align*}
&\sum_{m=m^*}^M \left( - \frac{\log_2\frac{x^{(m-1)}}{x^{(m)}}   }{4(\ln T)\eta}\right)h\left(\frac{u_i}{x^{(m)}}\right)
\leq \sum_{m=m^*}^M \left( - \frac{\log_2\frac{x^{(m-1)}}{x^{(m)}}   }{4(\ln T)\eta}\right)h\left(\frac{u_i}{2x^{(M)}}\right) \\
= & -\frac{\log_2\frac{x^{(m^*-1)}}{x^{(M)}}}{4(\ln T)\eta} h\left(\frac{u_i}{2x^{(M)}}\right)
\leq -\frac{1}{4(\ln T)\eta} h\left(\frac{u_i}{2x^{(M)}}\right) \tag{$x^{(m^*-1)} \geq 2x^{(M)}$} \\
= &-\frac{1}{4(\ln T)\eta} \left( \frac{u_i}{2x^{(M)}} -1 - \ln \left(\frac{u_i}{2x^{(M)}}\right) \right)\\
= & -\frac{1}{8(\ln T)\eta}\max_{t\in[T]}\frac{u_i}{x_{t,i}} + \mathcal{O}\left(\frac{\ln(NTu_i)}{\eta\ln T}\right)
\leq -\frac{1}{8(\ln T)\eta}\max_{t\in[T]}\frac{u_i}{x_{t,i}} + \mathcal{O}\left(\frac{1+Nu_i}{\eta}\right), 
\end{align*}
where in the first inequality we use the fact for $m\geq m^*$, $ \frac{u_i}{x^{(m)}} \geq\frac{u_i}{x^{(m^*)}} \geq  \frac{u_i}{2x^{(M)}}\geq 1$,
and that $h(y)$ is positive and increasing when $y\geq 1$. 

On the other hand, if $x^{(M)}\geq \frac{1}{2N}$ or $x^{(M)}\geq \frac{u_i}{2}$, we have $ \max_{t\in[T]} \frac{u_i}{x_{t,i}}= \frac{u_i}{x^{(M)}}\leq 2Nu_i+2$
and thus
\[
\sum_{t=2}^T\left( \frac{1}{\eta_{t,i}}-\frac{1}{\eta_{t-1,i}} \right)h\left(\frac{u_{i}}{x_{t,i}}\right) \leq 0 \leq  
\frac{1}{8(\ln T)\eta} \left(-\max_{t\in[T]}\frac{u_i}{x_{t,i}}+  2Nu_i+2 \right). 
\]
Considering both cases and Eq.~\eqref{eqn:before_remaining}, we get
\begin{align*}
&\sum_{t=1}^T \left(D_{\psi_t}(u,x_t)- D_{\psi_t}(u,x_{t+1})- \frac{\beta}{2}\sum_{t=1}^T\inner{\nb_t, u-x_{t}}^2\right)\\
\leq & ~ \mathcal{O}\left(\beta N + \frac{N\ln T}{\eta}\right) + \frac{1}{8(\ln T)\eta} \sum_{i=1}^N\left(-\max_{t\in[T]}\frac{u_i}{x_{t,i}}+  2Nu_i+2 \right) + \sum_{i=1}^N \mathcal{O}\left(\frac{1+Nu_i}{\eta}\right)\\
\leq & ~ \mathcal{O}\left(\frac{N\ln T}{\eta}\right) - \frac{1}{8(\ln T)\eta} \sum_{i=1}^N\max_{t\in[T]}\frac{u_i}{x_{t,i}},
\end{align*}
finishing the proof.
\end{proof}

\begin{lemma} 
\alg guarantees $\sum_{t=1}^T \inner{\nb_t, x_t-x_{t+1}} \leq \frac{8N\ln T}{\beta}$. 
\end{lemma}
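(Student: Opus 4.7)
The plan is to reduce the lemma to the classical ONS stability inequality $\inner{\nb_t, x_t - x_{t+1}} \leq \frac{2}{\beta}\, \nb_t^\top A_t^{-1} \nb_t$ and then conclude by a standard log-determinant telescope. The key observation that makes the analysis essentially unchanged from plain ONS is that the extra log-barrier summand in $\psi_t$ contributes only positively to the local Bregman divergence, so it can safely be discarded in this particular estimate.

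The first step is the pointwise stability bound. I would use the first-order optimality of $x_{t+1}$ for the constrained problem in \eqref{eq:OMD}; since $x_t \in \bar{\Delta}_N$ by an easy induction starting from $x_1 = \frac{1}{N}\one$, plugging $x_t$ in as the test point gives $\inner{\nb_t + \nabla \psi_t(x_{t+1}) - \nabla \psi_t(x_t),\, x_t - x_{t+1}} \geq 0$, which rearranges to $\inner{\nb_t, x_t - x_{t+1}} \geq D_{\psi_t}(x_{t+1}, x_t) + D_{\psi_t}(x_t, x_{t+1}) \geq D_{\psi_t}(x_{t+1}, x_t)$. Because $\nabla^2 \psi_t = \beta A_t + \mathrm{diag}(1/(\eta_{t,i} x_i^2)) \succeq \beta A_t$, the log-barrier part only strengthens the local quadratic lower bound, so $D_{\psi_t}(x_{t+1}, x_t) \geq \frac{\beta}{2}\norm{x_{t+1}-x_t}_{A_t}^2$. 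Combining with Cauchy--Schwarz $\inner{\nb_t, x_t - x_{t+1}} \leq \norm{\nb_t}_{A_t^{-1}}\norm{x_t - x_{t+1}}_{A_t}$ and cancelling one factor of $\norm{x_t - x_{t+1}}_{A_t}$ delivers $\inner{\nb_t, x_t - x_{t+1}} \leq \frac{2}{\beta}\, \nb_t^\top A_t^{-1} \nb_t$.

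The second step is the sum. By Sherman--Morrison applied to $A_t = A_{t-1} + \nb_t\nb_t^\top$, $\nb_t^\top A_t^{-1}\nb_t = \frac{z_t}{1+z_t}$ where $z_t = \nb_t^\top A_{t-1}^{-1} \nb_t$; the elementary inequality $\frac{z}{1+z} \leq \ln(1+z)$ then gives $\nb_t^\top A_t^{-1}\nb_t \leq \ln(1+z_t) = \ln\det(A_t) - \ln\det(A_{t-1})$, which telescopes to $\ln\det(A_T) - \ln\det(A_0)$. To bound the latter I apply AM--GM on the eigenvalues of $A_0^{-1}A_T = I_N + \frac{1}{N}\sum_t \nb_t\nb_t^\top$: the safety-net constraint $x_{t,i} \geq \frac{1}{NT}$ combined with the normalization $\max_i r_{t,i} = 1$ forces $\inner{x_t, r_t} \geq \frac{1}{NT}$, so $\norm{\nb_t}_\infty \leq NT$ and $\norm{\nb_t}_2^2 \leq N^3T^2$. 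A short calculation then gives $\ln\det(A_T) - \ln\det(A_0) \leq 4N\ln T$ (using $T > N$), and multiplying by $2/\beta$ yields the claimed $\frac{8N\ln T}{\beta}$.

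The only mildly non-routine ingredient is making sure the constrained first-order condition goes through with a legal test point: I need $x_t$ itself to be feasible in $\bar{\Delta}_N$, which is precisely what the initialization $x_1 = \frac{1}{N}\one$ and the form of the update rule guarantee inductively. Beyond that the argument is the textbook ONS analysis of \citep{hazan2007logarithmic}, and the log-barrier part of the regularizer plays no active role in this lemma; it only enters via its non-negative contribution to $D_{\psi_t}(x_{t+1}, x_t)$, which is why the constant matches the pure-ONS bound.
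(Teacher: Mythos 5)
Your proposal is correct and follows essentially the same route as the paper's proof: derive the pointwise stability bound $\inner{\nb_t, x_t - x_{t+1}} \leq \tfrac{2}{\beta}\nb_t^\top A_t^{-1}\nb_t$ by exploiting the fact that the log-barrier Hessian is PSD (so only the $\beta A_t$ part of $\nabla^2\psi_t$ is needed), then apply the Sherman--Morrison/log-determinant telescope (Lemma 11 of Hazan et al.) and AM--GM with a bound on $\norm{\nb_t}_2^2$ coming from $x_{t,i}\geq\tfrac{1}{NT}$. The only cosmetic difference is that the paper reaches the stability bound via Taylor-expanding $F_t$ around $x_{t+1}$ and working with $\norm{\cdot}_{\nabla^2 F_t(\xi_t)}$ before discarding the log-barrier Hessian, whereas you use the three-point identity directly with the uniform lower bound $\nabla^2\psi_t \succeq \beta A_t$; these are the same inequality stated in two standard ways.
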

\begin{proof}
Define $F_t(x) \triangleq \inner{x, \nb_t} + D_{\psi_t}(x,x_t)$.
Using Taylor's expansion and first order optimality of $x_{t+1}$ we have
\begin{align*}
F_t(x_t) - F_t(x_{t+1}) = & ~ \nabla F_t(x_{t+1})^\top(x_t - x_{t+1}) + \frac{1}{2}(x_t - x_{t+1})^\top\nabla^2 F_t(\xi_t)(x_t - x_{t+1}) \\
\geq & ~ \frac{1}{2}(x_t - x_{t+1})^\top\nabla^2 F_t(\xi_t)(x_t - x_{t+1}) 
=  ~ \frac{1}{2} \norm{x_t - x_{t+1}}_{\nabla^2F_t(\xi_t)}^2,
\end{align*}
where $\xi_t$ is some point that lies on the line segment joining $x_t$ and $x_{t+1}$.
On the other hand, by the definition of $F_t$, nonnegativity of Bregman divergence, and H\"{o}lder inequality, we have
\begin{align*}
F_t(x_t) - F_t(x_{t+1}) =  \inner{x_t-x_{t+1}, \nb_t} - D_{\psi_t}(x_{t+1}, x_t) 
\leq  \norm{x_t - x_{t+1}}_{\nabla^2F_t(\xi_t)} \norm{\nb_t}_{\nabla^{-2}F_t(\xi_t)}.
\end{align*}
Combining the above two inequalities we get $\norm{x_t - x_{t+1}}_{\nabla^2F_t(\xi_t)} \leqslant 2\norm{\nb_t}_{\nabla^{-2}F_t(\xi_t)}$, and thus
\begin{align*}
\inner{\nb_t, x_t - x_{t+1}} &\leq \norm{\nabla_t}_{\nabla^{-2}F_t(\xi_t)}\norm{x_t - x_{t+1}}_{\nabla^{2}F_t(\xi_t)} \leq 2\norm{\nb_t}_{\nabla^{-2}F_t(\xi_t)}^2 \\
= & ~ 2 \nb_t^T (\beta A_t + \nabla^2 \varphi_t(\xi_t))^{-1} \nabla_t \leq \frac{2}{\beta} \nabla_t^\top A_t^{-1} \nabla_t,
\end{align*}
where $\varphi_t$ is the log-barrier regularizer defined in the proof of Lemma~\ref{lem:negative_term} (whose Hessian is clearly positive semi-definite).
Using Lemma 11 in \cite{hazan2007logarithmic}, we continue with
\begin{align*}
\frac{2}{\beta} \sum_{t=1}^T \nabla_t^\top A_t^{-1} \nabla_t \leq \frac{2}{\beta}\ln \frac{|A_T|}{|A_0|} \leq \frac{2N\ln (1+T^3)}{\beta} \leq \frac{8N\ln T}{\beta},
\end{align*}
where the second inequality uses the fact $ \ln|A_0|=N\ln N$ and by AM-GM inequality $ \ln|A_T|\leq N\ln \frac{\mathbf{Tr}(A_T)}{N} \leq N\ln \left(N+\frac{\sum_{t=1}^T \norm{\nb_t}^2_2 }{N}\right)\leq N\ln(N+NT^3)$ since $\norm{\nb_t}^2_2 \leq N^2T^2 (\sum_i r_{t,i}^2)/(\sum_i r_{t,i})^2 \leq N^2T^2$.
This finishes the proof.
\end{proof}

\paragraph{Analysis of \adaalg}

To prove Lemma~\ref{lem:stability_a_t}, we make use of the following stability lemmas whose proofs are deferred to Appendix~\ref{appendix:proofs}.
\begin{lemma}
\label{stabilityu}
In \adaalg, if $\gamma \leq \frac{1}{25}$, then $1-\frac{\sqrt{\gamma}}{2} \leq \frac{u_{t+1,i}}{u_{t,i}} \leq 1 + \frac{\sqrt{\gamma}}{2} $ for all $t$ and $i$.
\end{lemma}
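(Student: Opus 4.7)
The plan is to establish multiplicative stability of $u_t$ from first-order optimality together with the strong convexity contributed by the log-barrier. Writing $F_t(u)=\sum_{s=1}^{t}f_s(u)+\frac{1}{\gamma}\sum_{i=1}^N\ln(1/u_i)$, both $u_t$ and $u_{t+1}=\argmin_{u\in\bar{\Delta}_N}[F_t(u)+f_{t+1}(u)]$ are interior points (the log-barrier blows up on the $u_i=0$ face, and the lower bound $u_i\geq 1/(NT)$ is only active in the trivial case where it forces $u_{t,i}=u_{t+1,i}=1/(NT)$), so their gradients are constant multiples of $\mathbf{1}$. Pairing the KKT identities with the zero-sum vector $u_t-u_{t+1}$ yields
$$
\langle\nabla F_t(u_t)-\nabla F_t(u_{t+1}),\,u_t-u_{t+1}\rangle \;=\; \langle\nabla f_{t+1}(u_{t+1}),\,u_t-u_{t+1}\rangle.
$$

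By the mean value theorem the left-hand side equals $\|u_t-u_{t+1}\|_{H(\xi)}^{2}$ for some $\xi$ on the segment joining $u_t$ and $u_{t+1}$, where $H(\xi)=\nabla^2 F_t(\xi)$; applying Cauchy--Schwarz to the right-hand side and dividing gives the standard local-norm shift estimate $\|u_t-u_{t+1}\|_{H(\xi)} \leq \|\nabla f_{t+1}(u_{t+1})\|_{H(\xi)^{-1}}$. Next I would exploit that the log-barrier contributes a Hessian of $\frac{1}{\gamma}\mathrm{diag}(\xi_i^{-2})$ while the remaining loss Hessians are PSD, so $H(\xi)\succeq\frac{1}{\gamma}\mathrm{diag}(\xi_i^{-2})$ and $H(\xi)^{-1}\preceq\gamma\,\mathrm{diag}(\xi_i^{2})$. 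This turns the bound into $\|u_t-u_{t+1}\|_{H(\xi)}^{2}\leq \gamma\sum_{i}(\xi_i r_{t+1,i}/\langle u_{t+1},r_{t+1}\rangle)^{2}$.

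To finish, observe that the numbers $p_i\triangleq u_{t+1,i}r_{t+1,i}/\langle u_{t+1},r_{t+1}\rangle$ form a probability distribution, so $\sum_i p_i^{2}\leq 1$; provided $\xi_i$ is within a constant factor of $u_{t+1,i}$, the sum is $\order(1)$ and therefore $\|u_t-u_{t+1}\|_{H(\xi)}\leq \order(\sqrt{\gamma})$. Combined with the lower bound $H(\xi)\succeq \frac{1}{\gamma}\mathrm{diag}(\xi_i^{-2})$, this transfers to the per-coordinate inequality $(u_{t+1,i}-u_{t,i})^{2}/\xi_i^{2}\leq \order(\gamma^{2})$, which is already a multiplicative bound $|u_{t+1,i}/u_{t,i}-1|=\order(\gamma)$; for $\gamma\leq 1/25$ this is comfortably within $\sqrt{\gamma}/2$.

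The main obstacle is the circular dependence of the bound on $\xi$: the $\order(1)$ estimate on $\sum_i p_i^{2}(\xi_i/u_{t+1,i})^{2}$ implicitly assumes that $\xi$ (and hence $u_{t+1}$) is already multiplicatively close to $u_t$, which is exactly the conclusion we are trying to prove. I plan to resolve this by a continuation/bootstrap argument: parameterize $u^{\alpha}=\argmin_{u\in\bar{\Delta}_N}\{F_t(u)+\alpha f_{t+1}(u)\}$ for $\alpha\in[0,1]$, so $u^{0}=u_t$ and $u^{1}=u_{t+1}$, and note that $\alpha\mapsto u^{\alpha}$ is continuous. If at some first $\alpha^{*}$ the ratio $u^{\alpha^{*}}_i/u_{t,i}$ touches the boundary $1\pm\sqrt{\gamma}/2$, then the $\order(\gamma)$ estimate above applied at $\alpha^{*}$ (where $\xi$ is then sandwiched within factor $1+\sqrt{\gamma}/2$) gives a strict improvement, contradicting extremality. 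Equivalently one can first derive a crude factor-$2$ multiplicative bound from the same inequality with a looser constant, then substitute it back to obtain the claimed tight $\sqrt{\gamma}/2$ bound.
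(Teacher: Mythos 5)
Your proposal attacks the lemma with the same core ingredients as the paper's proof — the log-barrier makes $\nabla^2\Psi_{t+1}$ dominate $\frac{1}{\gamma}\mathrm{diag}(u_i^{-2})$, the new gradient is small in the local dual norm because $\sum_i (u_ir_{t+1,i}/\langle u, r_{t+1}\rangle)^2 \le 1$, and a self-consistency step controls the Hessian at the intermediate point — but you execute the self-consistency step differently. The paper's proof compares \emph{function values}: it shows via Taylor expansion that $\Psi_{t+1}(u') > \Psi_{t+1}(u_t)$ on the sphere $\|u'-u_t\|_{\nabla^2\Psi_{t+1}(u_t)} = 1/2$, so the minimizer $u_{t+1}$ must lie strictly inside, and the intermediate point $\xi$ is automatically inside too. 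You instead compare \emph{gradients}: you pair first-order optimality of $u_t$ and $u_{t+1}$ with the zero-sum direction $u_t-u_{t+1}$, apply the mean-value theorem to $\nabla F_t$, and then close the circularity by a continuation in $\alpha$. Both are standard Dikin-ellipsoid-style arguments and both work; the sphere argument is arguably tighter and requires no explicit homotopy. One other cosmetic difference: the paper evaluates $\nabla f_{t+1}$ at $u_t$, you at $u_{t+1}$; both are fine once the bootstrap controls $\xi_i/u_{t,i}$.

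Two details in your write-up need repair, though neither is fatal. First, your claim that $u_t$ and $u_{t+1}$ are interior to $\bar\Delta_N$ (so that $\nabla F_t(u_t)$ is a multiple of $\mathbf{1}$) is not justified: the log-barrier keeps $u_i$ away from $0$, but it does \emph{not} prevent the constraint $u_i \ge 1/(NT)$ from being active, and it may be active at different coordinates for $u_t$ versus $u_{t+1}$, so your "trivial case" dismissal does not hold. What saves you is that the variational inequalities $\nabla F_t(u_t)^\top(u_{t+1}-u_t)\ge 0$ and $\nabla F_{t+1}(u_{t+1})^\top(u_t-u_{t+1})\ge 0$ still hold at a constrained minimizer; adding them gives
\[
\langle \nabla F_t(u_t)-\nabla F_t(u_{t+1}),\,u_t-u_{t+1}\rangle \;\le\; \langle \nabla f_{t+1}(u_{t+1}),\,u_t-u_{t+1}\rangle,
\]
an inequality rather than the equality you state, but in the direction you need, so the rest of the chain survives. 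Second, your remark that the final $\order(\gamma)$ bound is "comfortably within $\sqrt{\gamma}/2$" is not automatic and needs the constant: tracking it through the bootstrap gives roughly $|u_{t+1,i}/u_{t,i}-1|\le \frac{11}{9}\gamma$, which is below $\sqrt{\gamma}/2$ precisely when $\gamma \le \frac{81}{484}$ — satisfied by $\gamma \le \frac{1}{25}$ but not "comfortable" for free. You should spell out that numeric check, exactly as the paper does when it verifies $\frac{25}{242}\ge \frac{\sqrt{\gamma}}{2}$.
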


\begin{lemma}
\label{stabilityx}
In \adaalg, if $\eta \leq \frac{1}{300}$, then $1-\frac{\sqrt{3\eta}}{2} \leq \frac{x_{t+1,i}}{x_{t,i}} \leq 1 + \frac{\sqrt{3\eta}}{2} $ for all $t$ and $i$. 
\end{lemma}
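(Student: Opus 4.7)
The plan is to exploit the very strong curvature of the log-barrier component of $\psi_t$: its Bregman divergence blows up whenever $x_{t+1,i}$ drifts multiplicatively from $x_{t,i}$, so any improvement of $F_t(x)\triangleq\inner{x,\nb_t}+D_{\psi_t}(x,x_t)$ over $F_t(x_t)$ forces every ratio $x_{t+1,i}/x_{t,i}$ to be close to $1$. I would carry out the argument as a two-step bootstrap: first analyze the problem on the range where the log-barrier is locally quadratic, and then show that $x_{t+1}$ must actually lie in that range.

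The core inequality is $F_t(x_{t+1})\leq F_t(x_t)$, which holds because $x_t\in\bar{\Delta}_N$; rearranging gives
\[
D_{\psi_t}(x_{t+1},x_t)\leq \inner{\nb_t,x_t-x_{t+1}}.
\]
Writing $z_i\triangleq x_{t+1,i}/x_{t,i}-1$ and using $\nb_t=-r_t/\inner{x_t,r_t}$, the right-hand side rewrites as $\sum_i p_iz_i$ where $p_i=r_{t,i}x_{t,i}/\inner{x_t,r_t}$ lies in $\Delta_N$, so it is at most $\max_i|z_i|$. On the left, the nonnegative ONS piece $\frac{\beta}{2}\norm{x_{t+1}-x_t}_{A_t}^2$ may be discarded, leaving $\sum_i \frac{1}{\eta_{t,i}}h(1+z_i)$ with $h(y)=y-1-\ln y$; the rule~\eqref{eq:eta} together with $x_{s,i}\geq 1/(NT)$ pins $\eta_{t,i}\in[\eta,e\eta]$. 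Under the tentative restriction $\max_i|z_i|\leq 1/2$, a short calculus check gives $h(1+z)\geq z^2/3$, so the inequality becomes $\sum_i \frac{z_i^2}{3e\eta}\leq \max_i|z_i|$; retaining only the index that achieves the maximum then forces $\max_i|z_i|\leq 3e\eta\leq \sqrt{3\eta}/2$ whenever $\eta\leq 1/300$.

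The delicate step is closing the bootstrap, since the quadratic lower bound on $h$ was invoked exactly on the range we are trying to establish. The cleanest way I see is to observe that the preceding derivation applies not only to $x_{t+1}$ itself but to any $y\in S\triangleq\{x\in\bar{\Delta}_N:\max_i|x_i/x_{t,i}-1|\leq 1/2\}$ satisfying $F_t(y)\leq F_t(x_t)$, yielding $\max_i|y_i/x_{t,i}-1|\leq 3e\eta$. If we had $x_{t+1}\notin S$, then by convexity of $F_t$ some point $y^\star$ on the segment from $x_t$ to $x_{t+1}$ would sit on the boundary of $S$ while still satisfying $F_t(y^\star)\leq F_t(x_t)$; the derived bound then forces $\max_i|y^\star_i/x_{t,i}-1|\leq 3e\eta<1/2$, contradicting $y^\star\in\partial S$. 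Hence $x_{t+1}\in S$ and the stability claim of the lemma follows.
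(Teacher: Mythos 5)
Your proof is correct, and it takes a genuinely different route from the paper's. The paper controls the local norm $\norm{x_{t+1}-x_t}_{\nabla^2 F_t(x_t)}$ directly: it Taylor-expands $F_t$, bounds $\norm{\nb_t}_{\nabla^{-2}F_t(x_t)}\leq\sqrt{3\eta}$ via the diagonal lower bound $\nabla^2 F_t(x_t)\succeq \frac{1}{3\eta}[1/x_{t,i}^2]_{\mathrm{diag}}$, and shows the Hessian is uniformly comparable to $\nabla^2 F_t(x_t)$ on the local-norm ball, concluding that $F_t$ strictly increases on the boundary of that ball. Your argument avoids the matrix machinery entirely: you rearrange $F_t(x_{t+1})\leq F_t(x_t)$ into $D_{\psi_t}(x_{t+1},x_t)\leq\inner{\nb_t,x_t-x_{t+1}}$, recognize that the right side is $\sum_i p_i z_i$ for a probability vector $p_i=r_{t,i}x_{t,i}/\inner{x_t,r_t}$ and hence at most $\max_i|z_i|$, and use the scalar inequality $h(1+z)\geq z^2/3$ on $|z|\leq 1/2$ to lower-bound the log-barrier part of the Bregman divergence. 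Both proofs share the same localization/bootstrap structure (the paper restricts to a local-norm ball, you restrict to the multiplicative box $S$), and both close it by convexity of $F_t$ along the segment; the constants work out in both cases under $\eta\leq 1/300$. The paper's Hessian approach is more uniform — the same template is reused nearly verbatim for Lemma~\ref{stabilityu} — while yours is more elementary and makes the role of the log-barrier (via $h$) and the special gradient structure $\nb_t=-r_t/\inner{x_t,r_t}$ completely explicit without invoking self-concordance-style Hessian comparison. One small point worth making precise when you write it up: $S$ is cut out of $\bar{\Delta}_N$ only by the box constraint (the segment from $x_t$ to $x_{t+1}$ already lies in $\bar{\Delta}_N$ by convexity), so the exit point $y^\star$ indeed satisfies $\max_i|y^\star_i/x_{t,i}-1|=1/2$ exactly, which is what the contradiction needs.
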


\begin{proof}[Proof of Lemma~\ref{lem:stability_a_t}]
Denote $a_{t}\triangleq \max_{s\in[t],i\in[N]}\frac{u_{t,i}}{x_{s,i}}$. 
Suppose $a_t$ attains its $\max$ at $s=s'$ and $i=i'$ (i.e., $a_t=\frac{u_{t,i'}}{x_{s',i'}}$), then when $s' \leq t-1$, we have by Lemma~\ref{stabilityu}
$
a_{t-1}\geq \frac{u_{t-1,i'}}{x_{s',i'}} = \left(\frac{u_{t-1,i'}}{u_{t,i'}}\right)a_{t}\geq \frac{a_t}{1 + \frac{\sqrt{\gamma}}{2}} \geq \frac{1}{2}a_t;
$
when $s'=t$, we have by Lemma~\ref{stabilityu} and~\ref{stabilityx}
$
a_{t-1} \geq \frac{u_{t-1,i'}}{x_{t-1,i'}} = \left(\frac{u_{t-1,i'}}{u_{t,i'}}\right)\left(\frac{x_{t,i'}}{x_{t-1,i'}}\right)a_{t}
\geq \frac{1-\frac{\sqrt{3\eta}}{2}}{1 + \frac{\sqrt{\gamma}}{2}}a_t \geq \frac{1}{2}a_t.
$
This concludes the proof.
\end{proof}

\begin{proof}[Proof of Lemma~\ref{lemma:epoch_regret}]

Define $\Gamma_t(u) = \sum_{s=1}^t f_s(x_s)-\sum_{s=1}^t f_s(u)-\frac{1}{\gamma}\sum_{i=1}^N \ln\frac{1}{u_i}$. 
Suppose condition \eqref{check_condition} holds at some time $t$ at the end of $\epo(\beta)$ and cause the algorithm to restart. Then we know that $ \beta\leq \alpha_{t-1}(u_{t-1})$ and $\beta>\alpha_t(u_t)$. The first condition guarantees that Eq.~\eqref{eq:regret_bound}  holds for $u_{t-1}$ at time $t-1$. Also, note that $u_{t-1}$ is the maximizer of $\Gamma_{t-1}$. Together they imply for any $u\in \bar{\Delta}_N$,
\begin{align}
\Gamma_{t-1}(u) \leq \Gamma_{t-1}(u_{t-1})\leq \sum_{s=1}^{t-1} f_s(x_s) - \sum_{s=1}^{t-1} f_s(u_{t-1}) \leq \mathcal{O}\left( \frac{N\ln T}{\eta} \right) + \frac{8N\ln T}{\beta} - \frac{a_{t-1}}{8(\ln T)\eta} , \label{eq:hold for t-1}
\end{align}
where we recall the notation $a_{t}\triangleq \max_{s\in[t],i\in[N]}\frac{u_{t,i}}{x_{s,i}}$. 
The second condition implies
\begin{align}
\frac{1}{\beta}&<\frac{1}{\alpha_t(u_t)}=\max\Big\{2, \max_{s\in[t]}8|\nb_s^\top(u_t-x_s)|\Big\}=\max\Big\{2, \max_{s\in[t]}8\Big|\frac{\inner{x_s, u_t-x_s}}{\inner{x_s, r_s}}\Big|\Big\} \notag\\
&\leq \max\Big\{ 2, 8\max_{s\in[t],i\in[N]}\frac{u_{t,i}}{x_{s,i}}+8 \Big\}= 8a_t + 8 \leq 16 a_{t-1} + 8, \label{eq:a_t-1_is_large}
\end{align}
where we apply Lemma~\ref{lem:stability_a_t} for the last step.
Further combining this with Eq.~\eqref{eq:hold for t-1}, and noting that $f_t(x) - f_t(u) \leq \max_i \ln\frac{u_i}{x_i} \leq \ln (NT)$ for any $x\in\bar{\Delta}_N$, we have for any $u\in\bar{\Delta}_N$, 
\begin{align*}
    \sum_{s=1}^t f_s(x_s) - \sum_{s=1}^t f_s(u)
    &\leq \ln(NT) + \sum_{s=1}^{t-1} f_s(x_s) - \sum_{s=1}^{t-1} f_s(u)\leq \ln(NT)+\Gamma_{t-1}(u)+\frac{N\ln (NT)}{\gamma}\\
    &\leq \mathcal{O}\left(\frac{N\ln T}{\eta}\right) + \frac{8N\ln T}{\beta}-\frac{1}{8(\ln T)\eta}\left(\frac{1}{16\beta}-\frac{1}{2}\right) 
    \tag{by~\eqref{eq:hold for t-1} and~\eqref{eq:a_t-1_is_large}}\\
    &\leq \mathcal{O}\left(N^2(\ln T)^3\right) - \frac{8N\ln T}{\beta}. 
\end{align*}

For the last epoch, we can apply Theorem~\ref{theorem:main} over the entire epoch and simply discard the negative term to obtain the claimed bound.
\end{proof}

\section{Conclusions and Open Problems}
We have shown that our new algorithm \adaalg achieves logarithmic regret of $\mathcal{O}(N^2(\ln T)^4)$ for online portfolio with much faster running time compared to Universal Portfolio, the only previous algorithm with truly logarithmic regret. A natural open problem is whether it is possible to further improve either the regret (from $N^2$ to $N$) or the computational efficiency without hurting the other. It is conjectured in~\citep{Tim2018conjecture} that FTRL with log-barrier~\cite{agarwal2005efficient} (i.e. our $u_t$'s) might also achieve logarithmic regret without dependence on $G$. On the pessimistic side, it is also a conjecture that it might be impossible to have the optimal regret with $\mathcal{O}(N)$ computations per round~\citep{orseau2017soft}.

\paragraph{Acknowledgements.}
The authors would like to thank Tim van Erven for introducing the problem and to thank
Tim van Erven, Dirk van der Hoeven, and Wouter Koolen for helpful discussions throughout the projects, especially on the FTRL approach.
The work was done while KZ visited the University of Southern California. KZ gratefully acknowledges financial support from China Scholarship Council.
HL and CYW are grateful for the support of NSF Grant \#1755781.

\newpage
\bibliography{bibliography} 

\begin{thebibliography}{10}

\bibitem{AbernethyHaRa12}
Jacob~D Abernethy, Elad Hazan, and Alexander Rakhlin.
\newblock Interior-point methods for full-information and bandit online
  learning.
\newblock {\em IEEE Transactions on Information Theory}, 58(7):4164--4175,
  2012.

\bibitem{agarwal2017corralling}
Alekh Agarwal, Haipeng Luo, Behnam Neyshabur, and Robert~E Schapire.
\newblock Corralling a band of bandit algorithms.
\newblock In {\em Conference on Learning Theory}, pages 12--38, 2017.

\bibitem{agarwal2005efficient}
Amit Agarwal and Elad Hazan.
\newblock Efficient algorithms for online game playing and universal portfolio
  management.
\newblock {\em Electronic Colloquium on Computational Complexity, TR06-033},
  2005.

\bibitem{agarwal2006algorithms}
Amit Agarwal, Elad Hazan, Satyen Kale, and Robert~E Schapire.
\newblock Algorithms for portfolio management based on the newton method.
\newblock In {\em Proceedings of the 23rd international conference on Machine
  learning}, pages 9--16, 2006.

\bibitem{boyd2004convex}
Stephen Boyd and Lieven Vandenberghe.
\newblock {\em Convex optimization}.
\newblock Cambridge university press, 2004.

\bibitem{bubeck2018sparsity}
S{\'{e}}bastien Bubeck, Michael~B. Cohen, and Yuanzhi Li.
\newblock Sparsity, variance and curvature in multi-armed bandits.
\newblock In {\em International Conference on Algorithmic Learning Theory},
  2018.

\bibitem{bubeck2015sampling}
S{\'e}bastien Bubeck, Ronen Eldan, and Joseph Lehec.
\newblock Sampling from a log-concave distribution with projected langevin
  monte carlo.
\newblock {\em arXiv preprint arXiv:1507.02564}, 2015.

\bibitem{cover1991universal}
Thomas~M Cover.
\newblock Universal portfolios.
\newblock {\em Mathematical Finance}, 1(1):1--29, 1991.

\bibitem{cover1996universal}
Thomas~M Cover.
\newblock Universal data compression and portfolio selection.
\newblock In {\em Foundations of Computer Science, 1996. Proceedings., 37th
  Annual Symposium on}, pages 534--538. IEEE, 1996.

\bibitem{foster2016learning}
Dylan~J Foster, Zhiyuan Li, Thodoris Lykouris, Karthik Sridharan, and Eva
  Tardos.
\newblock Learning in games: Robustness of fast convergence.
\newblock In {\em Advances in Neural Information Processing Systems}, pages
  4734--4742, 2016.

\bibitem{FreundSc97}
Yoav Freund and Robert~E. Schapire.
\newblock A decision-theoretic generalization of on-line learning and an
  application to boosting.
\newblock {\em Journal of Computer and System Sciences}, 55(1):119--139, August
  1997.

\bibitem{hazan2007logarithmic}
Elad Hazan, Amit Agarwal, and Satyen Kale.
\newblock Logarithmic regret algorithms for online convex optimization.
\newblock {\em Machine Learning}, 69(2-3):169--192, 2007.

\bibitem{hazan2016introduction}
Elad Hazan et~al.
\newblock Introduction to online convex optimization.
\newblock {\em Foundations and Trends{\textregistered} in Optimization},
  2(3-4):157--325, 2016.

\bibitem{helmbold1998line}
David~P Helmbold, Robert~E Schapire, Yoram Singer, and Manfred~K Warmuth.
\newblock On-line portfolio selection using multiplicative updates.
\newblock {\em Mathematical Finance}, 8(4):325--347, 1998.

\bibitem{kalai2002efficient}
Adam Kalai and Santosh Vempala.
\newblock Efficient algorithms for universal portfolios.
\newblock {\em Journal of Machine Learning Research}, 3(Nov):423--440, 2002.

\bibitem{lovasz2006fast}
L{\'a}szl{\'o} Lov{\'a}sz and Santosh Vempala.
\newblock Fast algorithms for logconcave functions: Sampling, rounding,
  integration and optimization.
\newblock In {\em Foundations of Computer Science, 2006. FOCS'06. 47th Annual
  IEEE Symposium on}, pages 57--68. IEEE, 2006.

\bibitem{narayanan2010random}
Hariharan Narayanan and Alexander Rakhlin.
\newblock Random walk approach to regret minimization.
\newblock In {\em Advances in Neural Information Processing Systems}, pages
  1777--1785, 2010.

\bibitem{nesterov1994interior}
Yurii Nesterov and Arkadii Nemirovskii.
\newblock {\em Interior-point polynomial algorithms in convex programming},
  volume~13.
\newblock Siam, 1994.

\bibitem{orseau2017soft}
Laurent Orseau, Tor Lattimore, and Shane Legg.
\newblock Soft-bayes: Prod for mixtures of experts with log-loss.
\newblock In {\em International Conference on Algorithmic Learning Theory},
  pages 372--399, 2017.

\bibitem{Tim2018conjecture}
Tim van Erven, Dirk van~der Hoeven, and Wouter Koolen.
\newblock personal communication, 2018.

\bibitem{wei2018more}
Chen-Yu Wei and Haipeng Luo.
\newblock More adaptive algorithms for adversarial bandits.
\newblock In {\em Conference on Learning Theory}, 2018.

\bibitem{zinkevich2003online}
Martin Zinkevich.
\newblock Online convex programming and generalized infinitesimal gradient
  ascent.
\newblock In {\em Proceedings of the 20th International Conference on Machine
  Learning}, pages 928--936, 2003.

\end{thebibliography}
\bibliographystyle{plain}

\newpage
\appendix
\section{Discussions on Tuning $\beta$}
\label{appendix:discussions}
In this section, we discuss the challenges in tuning $\beta$ via other approaches.
Recall that by the calculation shown after Theorem~\ref{theorem:main}, a $\beta$ such that
$\frac{1}{2}\alpha_T(u^*)\leq \beta \leq \alpha_T(u^*)$ where $u^*\triangleq \argmin_{u\in\bar{\Delta}_N} \sum_{t=1}^T f_t(u)$
ensures a regret bound of $\mathcal{O}(N^2(\ln T)^3)$. 
We first show the existence of such $\beta$ when the environment is \textit{oblivious}, that is, $r_1, \ldots, r_T$ are all fixed ahead of time. 
(However, we emphasize that our adaptive tuning method introduced in Section~\ref{section:adaptive beta} does not rely on the existence of such $\beta$ at all
and works even against non-oblivious environments.) 

When $r_1, r_2, \ldots, r_T$ are fixed and thus $u^*$ is also fixed, one can view $\alpha_T(u^*)$ as a (complicated) function of $\beta$.
It is not hard to see that this function is continuous: note that $x_{t+1}$ is a continuous function with respect to $ \beta, A_t, \eta_t, x_t, \nb_t$ because $x_{t+1}$ is the minimizer of a strongly convex function parameterized by these quantities. Also, $A_t, \eta_t, \nb_t$ are continuous functions of $\{x_1, \ldots, x_t\}$.\footnote{%
The fact that $\eta_t$ is continuous with respect to $\{x_1, \ldots, x_t\}$ depends on our new increasing learning rate scheme
and is not true for the scheme used in previous works~\citep{agarwal2017corralling, wei2018more} based on doubling trick.
} 
So overall, $x_{t+1}$ is a continuous function of $\{\beta, x_1, \ldots, x_t\}$. By induction, we know that $x_t$ is a continuous function of $\beta$ for all $t$. Finally, since $\alpha_T(u^*)$ continuously depends on $\{x_1, \ldots, x_T\}$, it is also a continuous function of $\beta$.  

Next note that the range of $\alpha_T(u^*)$ is $\big[\frac{1}{16NT},\frac{1}{2}\big]$ because $8|\nb_t^\top(u^*-x_t)|\leq 8\norm{\nb_t}_\infty\norm{u^*-x_t}_1\leq 16NT$. Thus by intermediate value theorem, if we vary $\beta$ from $\frac{1}{32NT}$ to $\frac{1}{2}$, there must exist a $\beta$ such that $\frac{1}{2}\alpha_T(u^*) \leq \beta\leq \alpha_T(u^*)$, which completes our argument. In fact, by $\alpha_T(u^*)$'s continuity, the set of $\beta$'s satisfying the inequality will form an interval or a union of intervals. 

Given that such $\beta$ does exist but is unknown, a natural idea is to instantiate $M$ copies of \alg's with different $\beta$'s forming a grid on $[\frac{1}{32NT}, \frac{1}{2}]$, then use Hedge to learn over these copies, which only introduces an additional $\ln M$ regret since the loss is exp-concave. If any of these $\beta$'s happens to fall into one of the intervals  described above, then the algorithm has overall regret $\mathcal{O}(N^2(\ln T)^3 + \ln M)$.

However, the challenge is to figure out how dense the grid has to be, which depends on the slope (i.e. Lipschitzness) of $\alpha_T(u^*)$ with respect to $\beta$.
The larger the slope, the denser the grid needs to be. 
Trivial analysis only shows that the Lipschitzness is exponential in $T$, which is far from satisfactory.
Also note that the running time per round of this algorithm is $(MN^{3.5})$. 
Therefore even if $M$ is polynomial in $T$ which is good for the regret, it still defeats our purpose of deriving more efficient algorithms.

\section{Omitted Proofs}
\label{appendix:proofs}

We first show that competing with smooth CRP from $\bar{\Delta}_N$ is enough.

\begin{lemma}
\label{lemma:uu'}
For any $u' \in \Delta_N$, with $u =\left(1-\frac{1}{T}\right) u' + \frac{\one}{NT} \in \bar{{\Delta}}_N$
we have
\[
\sum_{t=1}^T f_t(x_t) - \sum_{t=1}^T f_t(u') \leq \sum_{t=1}^T f_t(x_t)- \sum_{t=1}^T f_t(u) + 2.
\]
\end{lemma}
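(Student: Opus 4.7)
The plan is to subtract the common term $\sum_t f_t(x_t)$ and reduce the claim to the clean one-sided inequality
\[
\sum_{t=1}^T \bigl(f_t(u) - f_t(u')\bigr) \;\leq\; 2.
\]
Since $f_t(x) = -\ln\inner{x,r_t}$, the left-hand side equals $\sum_t \ln\frac{\inner{u',r_t}}{\inner{u,r_t}}$, so I just need to upper bound this log-ratio.

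Next I would exploit the convex-combination structure of $u$. Because $r_t$ has nonnegative entries, the extra uniform term in $u = (1-\tfrac{1}{T})u' + \tfrac{1}{NT}\one$ only helps: $\inner{u,r_t} \ge (1-\tfrac{1}{T})\inner{u',r_t}$, so termwise
\[
\ln\frac{\inner{u',r_t}}{\inner{u,r_t}} \;\le\; \ln\frac{1}{1-1/T} \;=\; -\ln\!\Bigl(1-\tfrac{1}{T}\Bigr).
\]
Summing over $T$ rounds gives an upper bound of $-T\ln(1-1/T)$, independent of the sequence $(r_t)$.

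The final step is the scalar inequality $-T\ln(1-1/T)\le 2$ for all $T\ge 2$. I would get this from $\ln(1-x)\ge -x/(1-x)$, which yields $-T\ln(1-1/T)\le T/(T-1)\le 2$ whenever $T\ge 2$; the hypothesis $T>N\ge 1$ guarantees this. Alternatively one could note that the sequence $-T\ln(1-1/T)$ is increasing to $1$ as $T\to\infty$ and equals $2\ln 2<2$ at $T=2$, so the bound $2$ is automatic.

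There is no real obstacle here, only a minor issue of choosing a bound on $-\ln(1-1/T)$ that is clean enough to yield the explicit constant $2$ in the statement; the argument via $-\ln(1-x)\le x/(1-x)$ delivers this in one line.
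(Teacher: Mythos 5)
Your proof is correct and amounts to a slightly more direct variant of the paper's argument. The paper proves the same reduced inequality $\sum_t\bigl(f_t(u)-f_t(u')\bigr)\le 2$, but does so by first invoking convexity of $f_t$ to get $\sum_t\bigl(f_t(u)-f_t(u')\bigr)\le\sum_t\nabla f_t(u)^\top(u-u')=\sum_t\frac{(u'-u)^\top r_t}{u^\top r_t}$, then using the componentwise bound $u'\le\frac{u}{1-1/T}$ to reach $\sum_t\frac{1}{T-1}=\frac{T}{T-1}\le 2$. You bypass the linearization step entirely: from $u\ge(1-\tfrac1T)u'$ componentwise and $r_t\ge 0$ you get $\inner{u,r_t}\ge(1-\tfrac1T)\inner{u',r_t}$, hence $\ln\frac{\inner{u',r_t}}{\inner{u,r_t}}\le-\ln(1-\tfrac1T)$ per term and $-T\ln(1-\tfrac1T)\le\frac{T}{T-1}\le2$ overall. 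Both routes hinge on the same key observation (the componentwise domination), and your version is arguably the more elementary one since it never needs the gradient inequality; it also happens to give a marginally tighter constant ($2\ln 2$ rather than $2$ for $T\ge2$).

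One small slip in your ``alternatively'' remark: the sequence $-T\ln(1-1/T)$ is \emph{decreasing} to $1$ as $T\to\infty$, not increasing. This is harmless for your conclusion, since the maximum is at $T=2$ where the value is $2\ln 2<2$, so the bound still holds; but the monotonicity claim as written is reversed.
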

\begin{proof}
By convexity of $f_t$, we have
\begin{align*}
\sum_{t=1}^T f_t(u) - \sum_{t=1}^T f_t(u') &\leq \sum_{t=1}^T \nabla f_t(u)^\top (u-u') \leq \sum_{t=1}^T \frac{(u'-u)^\top r_t}{u^\top r_t} \\
&\leq \sum_{t=1}^T \frac{\left(\frac{u}{1-\frac{1}{T}}-u\right)^\top r_t}{u^\top r_t} = \frac{1}{1-\frac{1}{T}} \leq 2.
\end{align*}
\end{proof}

Next we provide the omitted proofs for several lemmas.

\begin{proof}[Proof of Lemma~\ref{lem:first_step}]
Note that the function $h_t(x)=e^{-2\beta f_t(x)} = \inner{x, r_t}^{2\beta}$ is concave since $0 \leq 2\beta \leq 1$. 
Therefore we have
$
h_t(u)\leq h_t(x_t)+\inner{\nabla h_t(x_t), u-x_t}. 
$
Plugging in $\nabla h_t(x)=-2\beta e^{-2\beta f_t(x)}\nabla f_t(x)$ gives
\begin{align*}
e^{-2\beta f_t(u)} \leq e^{-2\beta f_t(x_t)} \left(1-2\beta \inner{\nabla_t, u-x_t}\right), 
\end{align*}
or equivalently
\begin{align*}
f_t(u)\geq f_t(x_t)-\frac{1}{2\beta}\ln\left(1-2\beta \inner{\nabla_t, u-x_t}\right)
\end{align*}
By the condition on $\beta$ we also have $\bigabs{2\beta \inner{\nabla_t, u-x_t}}\leq \frac{1}{4}$. Using the fact $-\ln (1-z)\geq z+\frac{1}{4}z^2$ for $|z|\leq \frac{1}{4}$ gives:
\begin{align*}
f_t(x_t)-f_t(u) &\leq \inner{\nb_t, x_t-u} - \frac{\beta}{2}\inner{\nb_t,x_t-u}^2 \\
&= \inner{\nb_t, x_t-x_{t+1}} + \inner{\nb_t, x_{t+1}-u}  - \frac{\beta}{2}\inner{\nb_t, x_t-u}^2 \\
&\leq \inner{\nb_t, x_t-x_{t+1}} + D_{\psi_t}(u,x_t)-D_{\psi_t}(u,x_{t+1}) - \frac{\beta}{2}\inner{\nb_t, x_t-u}^2,
\end{align*}
where the last step follows standard OMD analysis.
More specifically, since $x_{t+1}$ is the minimizer of the function $F_t(x)\triangleq \inner{\nb_t, x}+D_{\psi_t}(x,x_t)$, by the first-order optimality condition, we have $\inner{u-x_{t+1}, \nb F_t(x_{t+1})}\geq 0$ for all $u\in\bar{\Delta}_N$. Note $\nb F_t(x_{t+1})=\nb_t+\nb\psi_t(x_{t+1})-\nb\psi_t(x_{t})$. Rearranging the condition gives $ \inner{\nb_t, x_{t+1}-u} \leq \inner{\nb \psi_t(x_{t+1})-\nb\psi_t(x_{t}), u-x_{t+1}}$. Directly using the definition of Bregman divergence, one can verify $\inner{\nb \psi_t(x_{t+1})-\nb\psi_t(x_{t}), u-x_{t+1}}=D_{\psi_t}(u,x_t)-D_{\psi_t}(u,x_{t+1})-D_{\psi_t}(x_{t+1},x_t)$, which is further bounded by $ D_{\psi_t}(u,x_t)-D_{\psi_t}(u,x_{t+1})$ by the nonnegativity of Bregman divergence. This concludes the proof.
\end{proof}

\begin{proof}[Proof of Lemma~\ref{stabilityu}]
Define $\Psi_{t}(u) = \sum_{s=1}^{t} f_s(u) + \frac{1}{\gamma} \sum_{i=1}^N \ln \frac{1}{u_i}$.
We first show that if $\norm{u_t - u_{t+1}}_{\nabla^2 \Psi_{t+1}(u_t)} \leq \frac{1}{2}$ holds, then the conclusion follows. 

Indeed, note that $\nabla^2 \Psi_{t+1}(u_t) = \sum_{s=1}^{t+1}\frac{r_s r_s^\top}{\inner{u_t,r_s}^2} + \frac{1}{\gamma} \left[\frac{1}{u_{t,i}^2}\right]_{\text{diag}} \succeq \frac{1}{\gamma} \left[\frac{1}{u_{t,i}^2}\right]_{\text{diag}}$, where $\left[\frac{1}{u_{t,i}^2}\right]_{\text{diag}}$ represents the $N$ dimensional diagonal matrix whose $i$-th diagonal element is $\frac{1}{u_{t,i}^2}$. We thus have 
\begin{align*}
 & ~ \norm{u_t - u_{t+1}}_{\frac{1}{\gamma}[1/u_{t,i}^2]_{\text{diag}}} \leq \norm{u_t - u_{t+1}}_{\nabla^2 \Psi_{t+1}(u_t)} \leq 1/2,
\end{align*}
which implies 
$\frac{(u_{t,i} - u_{t+1,i})^2}{\gamma u_{t,i}^2} \leq \frac{1}{4}$, or
$
1-\frac{\sqrt{\gamma}}{2} \leq \frac{u_{t+1,i}}{u_{t,i}} \leq 1 + \frac{\sqrt{\gamma}}{2}
$ for all $i\in[N]$. 

Next, we prove the inequality $\norm{u_t - u_{t+1}}_{\nabla^2 \Psi_{t+1}(u_t)} \leq \frac{1}{2}$. Note $u_{t+1} = \argmin_{x\in \bar{\Delta}_N} \Psi_{t+1}(x)$. If we can prove $\Psi_{t+1}(u') > \Psi_{t+1}(u_t)$ for any $u'$ that satisfies $\norm{u' - u_{t}}_{\nabla^2 \Psi_{t+1}(u_t)} = \frac{1}{2}$, then we obtain the desired inequality $\norm{u_t - u_{t+1}}_{\nabla^2 \Psi_{t+1}(u_t)} \leq \frac{1}{2}$ by the convexity of $\Psi_{t+1}$. 

By Taylor's expansion, we know there exists some $\xi$ in the line segment joining $u'$ and $u_t$, such that 
\begin{align*}
\Psi_{t+1}(u') = & ~ \Psi_{t+1}(u_t) + \nabla \Psi_{t+1}(u_t)^\top (u'-u_t) + \frac{1}{2} (u'-u_t)^\top \nabla^2 \Psi_{t+1}(\xi)(u'-u_t) \\
= & ~ \Psi_{t+1}(u_t) + \nabla f_{t+1}(u_t)^\top(u'-u_t) + \nabla \Psi_{t}(u_t)^\top (u'-u_t) + \frac{1}{2} \norm{u'-u_t}^2_{\nabla^2 \Psi_{t+1}(\xi)} \\
\geq & ~ \Psi_{t+1}(u_t) + \nabla f_{t+1}(u_t)^\top(u'-u_t) + \frac{1}{2} \norm{u'-u_t}^2_{\nabla^2 \Psi_{t+1}(\xi)}\\
\geq & ~ \Psi_{t+1}(u_t) - \norm{\nabla f_{t+1}(u_t)}_{\nabla^{-2}\Psi_{t+1}(u_t)} \norm{u'-u_t}_{\nabla^{2}\Psi_{t+1}(u_t)} + \frac{1}{2} \norm{u'-u_t}^2_{\nabla^2 \Psi_{t+1}(\xi)}\\
= & ~ \Psi_{t+1}(u_t) - \frac{1}{2}\norm{\nabla f_{t+1}(u_t)}_{\nabla^{-2}\Psi_{t+1}(u_t)} + \frac{1}{2} \norm{u'-u_t}^2_{\nabla^2 \Psi_{t+1}(\xi)} \numberthis \label{medium}
\end{align*}
where the first inequality is by the optimality of $u_t$.
As $\nabla^2 \Psi_{t+1}(u_t)  \succeq \frac{1}{\gamma} \left[\frac{1}{u_{t,i}^2}\right]_{\text{diag}}$ implies $\nabla^{-2} \Psi_{t+1}(u_t) \preceq \gamma \left[u_{t,i}^2\right]_{\text{diag}}$, we continue with
\begin{align*}
\norm{\nabla f_{t+1}(u_t)}_{\nabla^{-2}\Psi_{t+1}(u_t)}^2 \leq & ~ \norm{\nabla f_{t+1}(u_t)}_{\gamma \left[u_{t,i}^2\right]_{\text{diag}}}^2 
= \frac{\gamma r_{t+1}^\top \left[u_{t,i}^2\right]_{\text{diag}} r_{t+1}}{ \inner{u_t, r_{t+1}}^2}  
= \frac{\gamma \sum_{i=1}^N u_{t,i}^2r_{t+1,i}^2}{\inner{u_t, r_{t+1}}^2} 
\leq \gamma. \numberthis \label{smallgradient}
\end{align*}
Note $\xi$ is between $u_t$ and $u'$, so $\norm{\xi - u_{t}}_{\nabla^2 \Psi_{t+1}(u_t)} \leq \frac{1}{2}$ and thus $\frac{\xi_i}{u_{t,i}} \leq 1 + \frac{\sqrt{\gamma}}{2} \leqslant \frac{11}{10}$ according to previous discussions. Therefore, we have 
\begin{equation}
\nabla^2 \Psi_{t+1}(\xi) = \sum_{s=1}^{t+1}\frac{r_s r_s^\top}{(r_s^\top \xi)^2} + \frac{1}{\gamma} \left[\frac{1}{\xi_i^2}\right]_{\text{diag}} \succeq \frac{100}{121} \left(\sum_{s=1}^{t+1}\frac{r_s r_s^\top}{(r_s^\top u_t)^2} + \frac{1}{\gamma} \left[\frac{1}{u_{t,i}^2}\right]_{\text{diag}} \right)
= \frac{100}{121} \nabla^2 \Psi_{t+1}(u_t). \label{lowerhessian}
\end{equation}
Now combining inequalities (\ref{medium}), (\ref{smallgradient}) and (\ref{lowerhessian}), we arrive at
\begin{align*}
\Psi_{t+1}(u') \geq & ~ \Psi_{t+1}(u_t) - \frac{\sqrt{\gamma}}{2} + \frac{50}{121} \norm{u'-u_t}^2_{\nabla^2 \Psi_{t+1}(u_t)} \\
= & ~ \Psi_{t+1}(u_t) - \frac{\sqrt{\gamma}}{2} + \frac{25}{242} \\
\geq & ~ \Psi_{t+1}(u_t),
\end{align*}
which finishes the proof.
\end{proof}

\begin{proof}[Proof of Lemma~\ref{stabilityx}]
The proof is similar to the proof of Lemma~\ref{stabilityu}.
Denote $F_t(x) = \inner{x, \nb_t} + D_{\psi_t}(x,x_t)$. 
We again first prove that if $\norm{x_t - x_{t+1}}_{\nabla^2 F_t(x_t)} \leq \frac{1}{2}$, then the conclusion follows.

Note $\nabla^2 F_t(x_t) = \beta A_t + \left[\frac{1}{\eta_{t,i} x_{t,i}^2}\right]_{\text{diag}} \succeq \left[\frac{1}{\eta_{t,i}x_{t,i}^2}\right]_{\text{diag}} \succeq \left[\frac{1}{3\eta x_{t,i}^2}\right]_{\text{diag}}$, because $\eta_{t,i}\leq \eta\exp\left(\log_T(\frac{NT}{N})\right)\leq 3\eta$. Thus we have 
\begin{align*}
 & ~ \norm{x_t - x_{t+1}}_{\frac{1}{3\eta}[1/x_{t,i}^2]_{\text{diag}}} \leq \norm{x_t - x_{t+1}}_{\nabla^2 F_t(x_t)} \leq 1/2, 
\end{align*}
which implies
$\frac{(x_{t,i} - x_{t+1,i})^2}{3\eta x_{t,i}^2} \leq \frac{1}{4}$ and thus
$1-\frac{\sqrt{3\eta}}{2} \leq \frac{x_{t+1,i}}{x_{t,i}} \leq 1 + \frac{\sqrt{3\eta}}{2}$ for all $i\in[N]$. 

It remains to prove the inequality $\norm{x_t - x_{t+1}}_{\nabla^2 F_t(x_t)} \leq \frac{1}{2}$. Since $x_{t+1} = \argmin_{x\in \bar{\Delta}_N} F_{t}(x)$, if we can prove $F_t(x') > F_t(x_t)$ for all $x'$ that satisfies $\norm{x' - x_{t}}_{\nabla^2 F_t(x_t)} = \frac{1}{2}$, then we obtain the desired inequality $\norm{x_t - x_{t+1}}_{\nabla^2 F_t(x_t)} \leq \frac{1}{2}$ by the convexity of $F_t$. By Taylor's expansion, there exists some $\zeta$ on the line segment joining $x'$ and $x_t$, such that 
\begin{align*}
F_t(x') = & ~ F_t(x_t) + \nabla F_t(x_t)^\top (x'-x_t) + \frac{1}{2} (x'-x_t)^\top \nabla^2 F_t(\zeta)(x'-x_t) \\
= & ~ F_t(x_t) + \nb_t^\top(x'-x_t) + \frac{1}{2} \norm{x'-x_t}^2_{\nabla^2 F_t(\zeta)} \\
\geq & ~ F_t(x_t) - \norm{\nb_t}_{\nabla^{-2}F_t(x_t)} \norm{x'-x_t}_{\nabla^{2}F_t(x_t)} + \frac{1}{2} \norm{x'-x_t}^2_{\nabla^2 F_t(\zeta)}\\
= & ~ F_t(x_t) - \frac{1}{2}\norm{\nb_t}_{\nabla^{-2}F_t(x_t)} + \frac{1}{2} \norm{x'-x_t}^2_{\nabla^2 F_t(\zeta)}. \numberthis \label{medium2}
\end{align*}

As $\nabla^2 F_t(x_t) = \beta A_t + \left[\frac{1}{\eta_{t,i} x_{t,i}^2}\right]_{\text{diag}} \succeq \frac{1}{3\eta} \left[\frac{1}{x_{t,i}^2}\right]_{\text{diag}}$, we have $\nabla^{-2} F_t(x_t) \preceq 3\eta \left[x_{t,i}^2\right]_{\text{diag}}$. Therefore
\begin{align*}
\norm{\nb_t}_{\nabla^{-2}F_t(x_t)}^2 \leq \norm{\nb_t}_{3\eta \left[x_{t,i}^2\right]_{\text{diag}}}^2 
= \frac{3\eta r_t^\top \left[x_{t,i}^2\right]_{\text{diag}} r_t}{\inner{x_t, r_t}^2} 
= \frac{3\eta \sum_{i=1}^N x_{t,i}^2r_{t,i}^2}{\inner{x_t, r_t}^2} 
\leq 3\eta. \numberthis \label{smallgradient2}
\end{align*}
Since $\zeta$ is between $x_t$ and $x'$, we have $\norm{\zeta - x_{t}}_{\nabla^2 F_t(x_t)} < \frac{1}{2}$ and thus $\frac{\zeta_i}{x_{t,i}} \leq 1 + \frac{\sqrt{3\eta}}{2} \leq \frac{21}{20}$ according to previous discussions and the fact $\eta \leqslant \frac{1}{300}$. Therefore, we have 
\begin{equation}
\nabla^2 F_t(\zeta) = \sum_{s=1}^t\frac{r_s r_s^\top}{(r_s^\top \zeta)^2} + \left[\frac{1}{\eta_{t,i}\zeta_i^2}\right]_{\text{diag}} 
\succeq  \frac{400}{441} \left(\sum_{s=1}^t\frac{r_s r_s^\top}{(r_s^\top x_t)^2} + \left[\frac{1}{\eta_{t,i}x_{t,i}^2}\right]_{\text{diag}} \right) 
= \frac{400}{441} \nabla^2 F_t(x_t). \label{lowerhessian2}
\end{equation}
Now combining inequalities (\ref{medium2}), (\ref{smallgradient2}) and (\ref{lowerhessian2}), we get
\begin{align*}
F_t(x') \geq & ~ F_t(x_t) - \frac{\sqrt{3\eta}}{2} + \frac{200}{441} \norm{x'-x_t}^2_{\nabla^2 F_t(x_t)} \\
= & ~ F_t(x_t) - \frac{\sqrt{3\eta}}{2} + \frac{50}{441} \\
\geq & ~ F_t(u_t),
\end{align*}
which finishes the proof.
\end{proof}



\end{document}